\documentclass{article}

\usepackage[preprint]{neurips_2025}

\usepackage{amsmath,amssymb,amsthm}
\usepackage{mathtools}
\usepackage{booktabs}
\usepackage{microtype}
\usepackage{hyperref}
\usepackage{url}
\usepackage{graphicx}
\usepackage{xcolor}
\usepackage{algorithm}
\usepackage{algorithmic}
\usepackage{bbm}
\usepackage{multirow}
\usepackage{pifont}

\newcommand{\R}{\mathbb{R}}
\newcommand{\E}{\mathbb{E}}
\newcommand{\KL}{\mathrm{KL}}
\newcommand{\sg}{\mathrm{sg}}          % stop-gradient
\newcommand{\supp}{\mathrm{supp}}
\newcommand{\Var}{\mathrm{Var}}
\newcommand{\cF}{\mathcal{F}}
\newcommand{\cL}{\mathcal{L}}
\newcommand{\cP}{\mathcal{P}}

\newcommand{\norm}[1]{\left\|#1\right\|}
\DeclareMathOperator*{\argmin}{arg\,min}

\theoremstyle{plain}
\newtheorem{theorem}{Theorem}
\newtheorem{proposition}[theorem]{Proposition}
\newtheorem{corollary}[theorem]{Corollary}
\newtheorem{lemma}[theorem]{Lemma}
\theoremstyle{definition}
\newtheorem{definition}[theorem]{Definition}
\theoremstyle{remark}
\newtheorem{remark}[theorem]{Remark}
\title{A Unified Framework for Data-Free One-Step Sampling via Wasserstein Gradient Flows}

\author{%
  Chenguang Wang \\
  School of Data Science \\
  The Chinese University of Hong Kong, Shenzhen \\
  \texttt{chenguangwang@link.cuhk.edu.cn} \\
  \And
  Tianshu Yu \\
  School of Data Science \\
  The Chinese University of Hong Kong, Shenzhen \\
  \texttt{yutianshu@cuhk.edu.cn}
}

\begin{document}
\maketitle

% ============================================================
\begin{abstract}
% ============================================================
We develop a unified theoretical framework for data-free one-step sampling from unnormalized
target distributions based on Wasserstein gradient flows. For a broad class of standard
f-divergence objectives, we show that the induced velocity field admits the universal form
$\mathbf{V}(x)=w(r(x))\,\beta(x)$, where $\beta(x)=\nabla \log (p(x)/q(x))$ is shared across
objectives and $w$ is determined solely by the choice of divergence. This decomposition shows
that standard f-divergence drifts share the same asymptotic target distribution $p$ and differ
primarily in how they redistribute transient repair effort across under-covered regions. To
formalize this distinction, we derive a one-step regional-response theory for a soft
under-coverage functional and obtain a compression--elasticity identity that links divergence
choice to the geometry of mass transport into under-covered regions. We further extend the
framework beyond the f-divergence family to the Log-Variance (LV) divergence, analyze how the
reference distribution alters the resulting drift structure, and motivate a practical LV-inspired
surrogate for data-free training. Based on this theory, we instantiate the framework with a
KDE-based implementation and describe a complementary normalizing-flow route, enabling one-step
inference after training. Experiments on multimodal Gaussian-mixture benchmarks are consistent
with the theoretical predictions and demonstrate effective one-step sampling on these targets.
\end{abstract}

% ============================================================
\section{Introduction}
\label{sec:intro}
% ============================================================

Sampling from unnormalized target distributions $p(x) \propto \exp(-E(x))$ is a foundational
problem in Bayesian inference, molecular simulation, and statistical physics. Classical
approaches---Markov Chain Monte Carlo~\cite{andrieu2003introduction,neal2011mcmc}, Langevin
dynamics~\cite{welling2011bayesian}, and Hamiltonian Monte Carlo~\cite{duane1987hybrid}---provide
asymptotically exact samples but rely on sequential iterative inference: each sample requires
hundreds to thousands of gradient evaluations, making these methods prohibitively expensive on
complex, multimodal energy landscapes.

Neural samplers aim to amortize this cost by learning a generator whose forward pass produces
approximate samples at test time. Boltzmann generators~\cite{noe2019boltzmann} and stochastic
normalizing flows~\cite{wu2020stochastic} learn transport maps to the target by minimizing
$\KL(q \| p)$, enabling single-pass generation when the target can be represented well by the
chosen invertible family. Their effectiveness on multimodal targets therefore depends strongly on
model expressivity and optimization, while bijective architectural constraints can limit
scalability in high dimensions. Diffusion-based samplers---including denoising diffusion
samplers~\cite{vargasdenoising}, controlled Monte Carlo diffusions~\cite{vargastransport},
iDEM~\cite{akhound2024iterated}, and improved off-policy variants~\cite{sendera2024improved}---learn
stochastic transport trajectories from the prior to the target and achieve strong empirical
performance on multimodal benchmarks, but they retain multi-step inference and typically require
hundreds of sequential network evaluations per sample.

Reducing inference to a single function evaluation (1-NFE) is therefore a central objective.
One route is distillation: \citet{zhang2024efficient} adapt consistency
models~\cite{song2023consistency} to the Boltzmann sampling setting, compressing a pre-trained
multi-step diffusion sampler into a one-step model, while \citet{jutras2025one}
remove the external teacher via self-distillation but still simulate a reverse diffusion process
at training time. These methods reduce inference cost, but they continue to inherit the
multi-step framework during training and provide no principled account of how a one-step update
repairs under-covered regions of a multimodal target. The Drifting Model~\citep{deng2026drifting}
offers a complementary route to 1-NFE sampling: instead of distilling a multi-step sampler, it
trains a generator by exploiting the training-time evolution of its pushforward distributions,
guided by a \emph{drifting field} that continuously transports $q$ toward $p$. This removes
inference-time iteration entirely, but the original formulation is restricted to the data-based
setting, where positive samples from the target are available. Extending this paradigm to the
data-free setting, where only the energy $E(x)$ or its gradient is accessible, remains open.
More fundamentally, the relationship between the drifting field and the underlying divergence
objective has not been systematically characterized, especially with respect to how different
objectives redistribute one-step repair effort across a multimodal landscape.

We address these limitations by developing a unified data-free framework grounded in Wasserstein
gradient flows. By computing the Wasserstein gradient flow velocity field for a broad class of
standard f-divergence functionals $\cF(q) = D_f(p \| q)$ via the JKO theorem~\cite{jordan1998variational},
we derive a universal formula $\mathbf{V}(x) = w(r)\,\beta(x)$, where
$\beta(x)=\nabla\log(p(x)/q(x))$ is shared across objectives and $w(r)$ is determined solely by
the choice of divergence. This formula recovers the Reverse-KL drifting direction underlying
kernelized drifting methods, extends it to the broader f-divergence family, and clarifies
that standard f-divergence drifts differ not in their asymptotic target, which remains $p$, but
in how they spatially reweight transient repair. We formalize this distinction through a local
one-step theory of regional repair that links divergence choice to the geometry of mass transport
into under-covered regions. We further extend the framework beyond the f-divergence family to the
Log-Variance (LV) divergence~\cite{richter2020vargrad,nusken2021solving,richter2022robust,richter2024improved},
which is widely used in neural sampler training~\cite{richter2024improved}. In that case, the
choice of reference distribution changes the drift structure, and the exact variational analysis
motivates a practical LV-inspired surrogate for data-free implementation.

We evaluate the resulting methods on multimodal Gaussian-mixture benchmarks spanning
two-dimensional mode coverage and higher-dimensional mode-bridging. These experiments provide a
data-free testbed for the unified framework and for the practical implementations developed in
later sections.

The contributions of this work are as follows.
\begin{enumerate}
  \item \textbf{Unified drifting field formula.} For a broad class of standard f-divergence functionals $\cF(q) = D_f(p \| q)$,
    we derive the Wasserstein gradient flow velocity field as $\mathbf{V}(x) = w(r) \cdot \beta(x)$,
    where $w(r) = r^2 f''(r)$, $r(x) = p(x)/q(x)$, and $\beta(x) = \nabla \log r(x)$.
    All f-divergences share the same directional signal $\beta$ and differ only in how they
    weight it via $w(r)$. We further extend this framework to the Log-Variance (LV) divergence,
    which lies outside the f-divergence family, and show how the choice of reference
    distribution changes the resulting drift structure.
  \item \textbf{Theoretical analysis of regional mode repair.} We derive a one-step
    regional-response formula for a soft under-coverage functional and obtain a
    compression--elasticity identity
    \[
      G_f(t;\delta,\varepsilon)
      =
      \int_{\Omega_t} p\,q_t\,w(r)\,\|\beta\|^2\,[\kappa_t-\eta_f(r)]\,dx,
      \qquad
      \eta_f(r)=\frac{r w'(r)}{w(r)}.
    \]
    This characterizes divergence-dependent transient repair bias, yields explicit repair
    thresholds for Reverse KL, Forward KL, $\chi^2$, and Tsallis drifts, and clarifies that the
    classical reverse-KL/forward-KL dichotomy should be interpreted here as a difference in
    transient regional response rather than in asymptotic target.
  \item \textbf{Data-free implementation routes.} We instantiate the framework with a KDE-based
    implementation, which estimates $\log q$ and $\nabla\log q$ directly from the current
    particles, and we describe a complementary normalizing-flow route, which can provide the same
    quantities analytically after fitting an invertible or auxiliary density model. For LV
    objectives, we further introduce a centered-log-ratio-gated surrogate compatible with the
    same sampler--estimator interface.
\end{enumerate}

Together, these results turn the drifting-field perspective into a unified data-free framework
for designing, analyzing, and implementing one-step samplers from unnormalized targets.

% ============================================================
\section{Background}
\label{sec:background}
% ============================================================

\subsection{Sampling as Wasserstein Gradient Flow}
\label{sec:wgf}

We work in the space $\cP_2(\R^d)$ of probability measures on $\R^d$ with finite second moments.
The central perspective is to formulate sampling as optimization in distribution space.
Given a target $p$ and a divergence functional $\cF: \cP_2 \to \R_{\geq 0}$ satisfying
$\cF(q) \geq 0$ and $\cF(q) = 0 \Leftrightarrow q = p$, sampling can be written as the
variational problem of finding $q^* = \argmin_{q \in \cP_2} \cF(q)$.

Minimizing $\cF$ via gradient descent in the 2-Wasserstein metric yields the
\emph{Wasserstein gradient flow}~\cite{villani2003topics}. By Otto's calculus~\cite{otto2001geometry},
the steepest descent velocity field in $(\cP_2, \mathcal{W}_2)$ is:
\begin{equation}
  V_t(x) = -\nabla_x \frac{\delta \cF}{\delta q_t}(x),
  \label{eq:wgf}
\end{equation}
where $\delta \cF / \delta q$ denotes the first variation of $\cF$ with respect to $q$.
The distribution $q_t$ evolves according to the continuity equation
$\partial_t q_t + \nabla \cdot (q_t V_t) = 0$, ensuring $q_t$ remains a valid probability
measure for all $t$. At the particle level, this admits the characteristic ODE interpretation:
any $x_0 \sim q_0$ transported by the flow satisfies
\begin{equation}
  \dot{x}_t = V_t(x_t), \qquad x_t \sim q_t,
  \label{eq:particle_ode}
\end{equation}
where $V_t$ depends on the current marginal $q_t$, coupling all particles through the shared
distribution. Euler-discretizing with step size $h$,
\begin{equation}
  x_{t+h} = x_t + h\,V_t(x_t),
  \label{eq:euler_step}
\end{equation}
gives the corresponding one-step particle update. If the flow converges to the target
equilibrium, then the distribution of transported particles approaches $p$.

The seminal result of \citet{jordan1998variational} established that when
$\cF = \KL(q \| p)$, Eq.~\eqref{eq:wgf} yields the velocity field
$V_t(x) = \nabla \log p(x) - \nabla \log q_t(x)$, whose associated continuity equation
is precisely the Fokker--Planck equation of overdamped Langevin
dynamics (see Appendix~\ref{app:proof_unified} for the general derivation and this
special case). This identifies classical Langevin sampling as the $\KL(q \| p)$ Wasserstein
gradient flow and naturally raises the question of what drifting fields arise from other choices
of $\cF$.

\subsection{The Drifting Model and Training-Time Amortization}
\label{sec:drifting_background}

\citet{deng2026drifting} propose \emph{Drifting Models}, which avoid inference-time
integration of the particle ODE~\eqref{eq:particle_ode} by amortizing it into the training
process. The key observation is that an iterative optimizer (e.g.\ SGD) already generates a
discrete sequence of generators $f_{\theta_0}, f_{\theta_1}, \ldots$, whose pushforwards
$q_i = (f_{\theta_i})_{\#} p_\epsilon$ trace a trajectory through $\cP_2$.
If one can bias each training step so that
\begin{equation}
  f_{\theta_{i+1}}(\epsilon) \approx f_{\theta_i}(\epsilon) + \mathbf{V}\bigl(f_{\theta_i}(\epsilon)\bigr)
  \quad \text{for typical } \epsilon \sim p_\epsilon,
  \label{eq:amortized_euler}
\end{equation}
then the training trajectory $q_0 \to q_1 \to \cdots$ implements a discrete Euler integration
of the WGF~\eqref{eq:euler_step} at the generator level---without any inference-time iteration.
This is enforced by the stop-gradient training objective:
\begin{equation}
  \cL(\theta) = \E_\epsilon \bigl[ \norm{f_\theta(\epsilon) - \sg\bigl(f_\theta(\epsilon) + \mathbf{V}(f_\theta(\epsilon))\bigr)}^2 \bigr],
  \label{eq:drifting_loss}
\end{equation}
where $\sg(\cdot)$ denotes the stop-gradient operator. Minimizing $\cL$ encourages
the generator outputs to match the one-step transport target
$f_\theta(\epsilon) + \mathbf{V}(f_\theta(\epsilon))$, i.e., it amortizes an Euler-type
transport update into parameter learning.
At convergence, $\cL(\theta) \approx 0$ implies $\mathbf{V}(f_\theta(\epsilon)) \approx 0$,
which (under mild conditions) implies $(f_\theta)_{\#} p_\epsilon \approx p$.
At inference, a single forward pass $x = f_\theta(\epsilon)$ produces a sample from $p$.

The drifting field in~\citep{deng2026drifting} is implemented through an attraction--repulsion
kernel: generated samples $x$ are attracted by positive samples $y^+ \sim p$ and repelled by
other generated samples $y^- \sim q$, using a Laplace kernel
$k(x,y) = \exp(-\|x-y\|/\tau)$. This formulation is effective in the data-based setting, but it
requires direct access to samples from $p$ and therefore does not transfer directly to the
data-free case.

\subsection{The Data-Free Setting}
\label{sec:data_free}

In this work we consider the \emph{data-free} setting: we assume access only to
$\nabla E(x) = -\nabla \log p(x)$ for any $x \in \R^d$, where $p(x) = \exp(-E(x)) / Z$ for an
unknown partition function $Z$. We cannot draw samples from $p$, and we cannot evaluate $p(x)$
in normalized form. The generator $f_\theta$ maps noise
$\epsilon \sim p_\epsilon$ (typically Gaussian) to samples in $\R^d$. The induced
pushforward distribution $q = (f_\theta)_{\#} p_\epsilon$ can be sampled from
by forward passes of $f_\theta$, but its density $q(x)$ and score $\nabla \log q(x)$ are
generally intractable unless $f_\theta$ is constrained to be an invertible architecture or is
paired with an auxiliary density model.

This mismatch defines the central implementation challenge of the data-free setting. The
target-side quantity $\nabla \log p(x)$ is available through the energy gradient, but the
$q$-side quantities required by Wasserstein gradient-flow drifts must be recovered from generated
particles alone. The remainder of the paper develops a unified theory for these drifts and then
turns that theory into practical data-free implementations.

% ============================================================
\section{Unified Framework}
\label{sec:unified}
% ============================================================

\subsection{The Unified Drifting Field for f-Divergences}
\label{sec:unified_formula}

We derive the Wasserstein gradient flow for a broad class of f-divergences. Recall that for a convex function
$f: (0,\infty) \to \R$ with $f(1) = 0$, the f-divergence from $p$ to $q$
is~\cite{csiszar1964informationstheoretische}:
$D_f(p \| q) = \int q(x) f(p(x)/q(x))\, dx$.

\begin{theorem}
  \label{thm:unified}
  Let $\cF(q) = D_f(p \| q)$ for a twice-differentiable strictly convex function
  $f:(0,\infty)\to\R$ with $f(1)=0$. Assume that $p$ and $q$ admit strictly positive
  $C^1$ densities on $\R^d$, that $r(x)=p(x)/q(x)$ is well-defined and $C^1$, and that
  the first variation of $\cF$ is well-defined. Then the Wasserstein gradient-flow
  velocity field associated with $\cF$ is
  \begin{equation}
    \mathbf{V}^{D_f}(x) = w(r(x)) \cdot \beta(x),
    \label{eq:unified}
  \end{equation}
  where
  \[
    \beta(x)=\nabla\log r(x)=\nabla\log p(x)-\nabla\log q(x),
    \qquad
    w(r)=r^2 f''(r).
  \]
\end{theorem}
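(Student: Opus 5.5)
The proof is a direct application of Otto's formula \eqref{eq:wgf}: compute the first variation of $\cF(q)=\int q\,f(p/q)\,dx$ with respect to $q$, then take its negative gradient. Perturb $q$ to $q+\epsilon\chi$, where $\chi$ is smooth, compactly supported and satisfies $\int\chi=0$, so that $q+\epsilon\chi$ remains a positive probability density for small $\epsilon$. Differentiating the integrand $g(q)=q f(p/q)$ pointwise in $q$ gives
\[
\frac{\partial}{\partial q}\bigl[q f(p/q)\bigr] = f(r) - r f'(r), \qquad r=p/q.
\]
Hence $\frac{\delta\cF}{\delta q}(x)=f(r(x))-r(x)f'(r(x))$, up to an additive constant. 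That constant is irrelevant because of the mass constraint and disappears after taking the gradient. The hypotheses (positive $C^1$ densities, $f$ twice differentiable, first variation well defined) are exactly what is needed to justify differentiating under the integral sign: on the compact support of $\chi$, $r$ is bounded away from $0$ and $\infty$, so dominated convergence applies.

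Next, apply the chain rule to this expression. Writing $\phi(r)=f(r)-rf'(r)$, we get $\phi'(r)=f'(r)-f'(r)-rf''(r)=-rf''(r)$. Therefore
\[
\nabla_x\frac{\delta\cF}{\delta q}=-r f''(r)\nabla r = -r^2 f''(r)\,\nabla\log r,
\]
using $\nabla r = r\,\nabla\log r$, which is valid since $r>0$ and $r\in C^1$. Substituting into \eqref{eq:wgf} yields $\mathbf{V}^{D_f}=r^2f''(r)\,\beta=w(r)\,\beta$. Finally, because $p$ and $q$ are strictly positive and $C^1$, $\nabla\log r=\nabla\log p-\nabla\log q$.

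The only step that is not purely mechanical is making the first-variation computation rigorous. This means checking that the directional derivative $\frac{d}{d\epsilon}\cF(q+\epsilon\chi)\big|_{\epsilon=0}$ equals $\int \phi(r)\chi\,dx$. I would do this with the mean value theorem applied to $g$ on the support of $\chi$, where all quantities are uniformly bounded; the statement's assumption that the first variation exists absorbs any global integrability issues.

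As a sanity check, take $f(r)=-\log r$, which gives $\KL(q\|p)$. Then $f''(r)=1/r^2$, so $w\equiv1$, and we recover the Langevin velocity $\nabla\log p-\nabla\log q$ from Section~\ref{sec:wgf}.
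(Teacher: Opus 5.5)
Your proposal is correct and follows the paper's proof essentially line for line: the first variation $f(r)-rf'(r)$, the identity $\frac{d}{dr}\bigl[f(r)-rf'(r)\bigr]=-rf''(r)$, and $\nabla r=r\beta$ substituted into Otto's formula. Your use of compactly supported, mean-zero perturbations (on whose support $r$ is bounded away from $0$ and $\infty$) makes rigorous the differentiation under the integral sign that the paper performs only formally.
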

The proof is deferred to Appendix~\ref{app:proof_unified}.

Theorem~\ref{thm:unified} isolates the central structural point of the paper.
The score difference $\beta(x) = \nabla \log r(x) = \nabla \log p(x) - \nabla \log q(x)$
represents the \emph{direction} in which $q$ should move to become more similar to $p$ at
location $x$; it is the universal gradient signal shared by all f-divergences covered by the
theorem. The weight function $w(r)$ then modulates the \emph{strength} of this signal according
to the local density ratio $r = p/q$. In well-covered regions ($r \approx 1$), $w(r)$ remains
close to $w(1)$; the behavior in under-covered regions ($r \gg 1$) is what distinguishes one
divergence from another, as we formalize in Section~\ref{sec:mode_coverage}. Thus the theorem
separates a universal correction direction from an objective-dependent spatial reweighting.
Specific instances of Eq.~\eqref{eq:unified} for the major f-divergences are collected in
Table~\ref{tab:divergences}; in particular, we will later use the Tsallis subfamily
$f_\alpha(r)=(r^\alpha-1)/(\alpha-1)$ for $\alpha>0$, $\alpha\neq1$, whose weight is
$w_\alpha(r)=\alpha r^\alpha$ and whose $\alpha=1$ case is recovered by continuity as Forward KL.

\begin{table}[t]
  \centering
  \caption{Unified drifting fields for the f-divergence family. For each divergence,
    $w(r) = r^2 f''(r)$ gives the weight function and $\beta(x) = \nabla \log r(x)$;
    here $r = p(x)/q(x)$.}
  \label{tab:divergences}
  \smallskip
  \begin{tabular}{lccc}
    \toprule
    $\cF(q)$ & $f(r)$ & $w(r) = r^2 f''(r)$ & $\mathbf{V}$ \\
    \midrule
    $\KL(q \| p)$  & $-\log r$     & $1$         & $\beta$ \\
    $\KL(p \| q)$  & $r \log r$    & $r$         & $(p/q)\cdot\beta$ \\
    $\chi^2(p \| q)$ & $(r-1)^2$   & $2r^2$      & $2(p/q)^2 \cdot \beta$ \\
    Tsallis $T_\alpha$ & $(r^\alpha-1)/(\alpha-1)$ & $\alpha r^\alpha$ & $\alpha(p/q)^\alpha\cdot\beta$ \\
    \bottomrule
  \end{tabular}
\end{table}

\subsection{Log-Variance Divergence}
\label{sec:lv}

The Log-Variance (LV) divergence lies outside the f-divergence family, yet admits an analogous
Wasserstein variational derivation. Given an unnormalized target $\tilde{p}(x) = Z \cdot p(x) = \exp(-E(x))$,
define the \emph{unnormalized density ratio}
\begin{equation}
  \tilde{r}(x) \coloneqq \frac{\tilde{p}(x)}{q(x)} = Z \cdot r(x),
  \label{eq:rtilde}
\end{equation}
so that $\log\tilde{r}(x) = \log\tilde{p}(x) - \log q(x) = \log r(x) + \log Z$.
Let $\nu$ be a reference probability distribution over $\R^d$ that serves as the integration
measure. The LV functional is:
\begin{equation}
  \cL(q) = \Var_\nu\!\bigl[\log\tilde{r}\bigr] = \E_\nu[\log^2\!\tilde{r}] - \bigl(\E_\nu[\log\tilde{r}]\bigr)^2.
  \label{eq:lv}
\end{equation}

A key structural property is the \emph{Z-Annihilator}: since $\log\tilde{r}(x) = \log r(x) + \log Z$,
the centered quantity $\log\tilde{r}(x) - \E_\nu[\log\tilde{r}]$ equals $\log r(x) - \E_\nu[\log r]$,
so the unknown additive constant $\log Z$ cancels exactly. This removes the
partition-function obstruction from the LV drift. However, the resulting field is only as
tractable as the remaining quantities under the chosen reference measure $\nu$; in particular,
expectations under $\nu=p$ still require additional approximation machinery in the data-free
setting.
We derive three cases according to the choice of $\nu$ (full derivations in Appendix~\ref{app:lv}):

\textbf{Case 1 ($\nu = q$).} The first variation involves a \emph{double coupling} (both the
integration measure and the integrand depend on $q$), yielding
\begin{equation}
  \mathbf{V}^{\nu=q}(x) = 2\bigl[1 - (\log\tilde{r}(x) - \E_q[\log\tilde{r}])\bigr] \cdot \beta(x).
  \label{eq:lv_case1}
\end{equation}
Write
\[
  z_q(x) \coloneqq \log\tilde r(x)-\E_q[\log\tilde r].
\]
Then the prefactor is $2(1-z_q(x))$. When $z_q(x)>1$, the coefficient becomes negative and
reverses the baseline direction $\beta(x)$. Thus the fully coupled choice $\nu=q$ does not obey
a monotone ``more under-covered implies stronger repair'' principle. Instead, it regularizes
fluctuations of the log-ratio over the \emph{current} $q$-support, so regions with very large
centered log-ratio can be attenuated or even pushed against $\beta$. From the standpoint of mode
coverage, this makes Case~1 the least robust of the three choices despite its analytical
tractability.

\textbf{Case 2 ($\nu = p$).} With $\nu$ fixed to the target, the first variation is uncoupled,
giving
\begin{equation}
  \mathbf{V}^{\nu=p}(x) = 2\,\frac{p(x)}{q(x)} \cdot \bigl[1 + (\log\tilde{r}(x) - \E_p[\log\tilde{r}])\bigr] \cdot \beta(x).
  \label{eq:lv_case2}
\end{equation}
This is the target-weighted LV objective. Unlike
Case~1, the sign is not reversed merely because the centered log-ratio is large; the only
sign-sensitive term is the bracket
\[
  1 + \bigl(\log\tilde r(x)-\E_p[\log\tilde r]\bigr).
\]
Accordingly, on any region where this bracket is nonnegative, the drift remains aligned with
$\beta$ and is amplified by the target-side weight $p/q$. This makes $\nu=p$ the most directly
target-weighted LV objective for mode repair among the three cases. Its drawback is computational
rather than structural: implementing it in the data-free setting requires approximating
expectations under $p$, which
introduces importance-sampling error.

\textbf{Case 3 ($\nu$ independent of $q$).} When $\nu$ is a fixed distribution whose density
does not depend on the current $q$, the product rule applied to $\nu(x)/q(x)$ introduces an
additional shape-correction term:
\begin{equation}
  \mathbf{V}^{\nu}(x) = 2\,\frac{\nu(x)}{q(x)}\Bigl[\beta(x) + \underbrace{\bigl(\log\tilde{r}(x) - \E_\nu[\log\tilde{r}]\bigr)\,\nabla\log\tfrac{\nu(x)}{q(x)}}_{\gamma(x):\;\text{shape-correction}}\Bigr].
  \label{eq:lv_case3}
\end{equation}
When $\nu=q_{\mathrm{sg}}$ is set to the current $q$ with the sampling measure frozen by
stop-gradient, $\gamma(x) = 0$ and the field reduces to $2\beta(x)$, recovering the Reverse-KL
drift direction with a factor-of-two gain in magnitude.
More generally, Case~3 is a fixed-reference design: its behavior depends on how $\nu$ is chosen,
because the shape-correction term $\gamma(x)$ can either help or hinder target-side repair.
In neural sampler training, replay-buffer techniques~\citep{akhound2024iterated} improve sample efficiency by reusing
historical samples across iterations. Setting $\nu$ as the replay buffer, a mixture of past
generator distributions, instantiates Case~3. Such a reference can retain broader historical
support than the current $q$, which can mitigate the support-myopia of Case~1, but this
flexibility comes with additional estimation error for $\nu$ and its score.

In practice, we adopt an LV-inspired surrogate built from centered log-ratio gating in
Section~\ref{sec:objective_impl}.

\subsection{Connections to the Drifting Model and SVGD}
\label{sec:recovery}
\label{sec:svgd_connection}

Setting $\cF = \KL(q \| p)$ in Theorem~\ref{thm:unified} gives $w(r) = 1$ and
$\mathbf{V}(x) = \beta(x) = \nabla \log r(x)$.
Define the \emph{mean-shift vector}~\cite{cheng1995mean}
$m_q(x) \coloneqq \sum_j W_j(x_j - x)$,
where $W_j = \mathrm{softmax}_j(-\|x - x_j\|^2/\tau)$ are the Gaussian kernel weights.
Under the Gaussian kernel $k(x,y) = \exp(-\|x-y\|^2/\tau)$, the KDE score satisfies
$\nabla \log \hat{q}(x) = \frac{2}{\tau}\,m_q(x)$ exactly (Appendix~\ref{app:recovery}), so
$\mathbf{V}(x) \approx \frac{2}{\tau}(m_p(x) - m_q(x))$.
Thus our framework recovers the Reverse-KL \emph{score-difference field} underlying kernelized
drifting methods. The original Drifting Model~\citep{deng2026drifting} uses a Laplace
kernel with unnormalized displacement vectors, so it should be regarded as a related
heuristic realization of the same Reverse-KL directional principle rather than as an
exact identity with the Gaussian KDE formula. A full derivation and kernel comparison
appear in Appendix~\ref{app:recovery}.

Beyond this connection to kernelized drifting methods, the same field is also closely related to
particle-based inference methods.
Our drift field $\beta(x) = \nabla\log r(x)$ is also structurally related to the optimal
perturbation direction of SVGD~\cite{liu2016stein},
$\phi^*_{q,p}(x) = \mathbb{E}_{y\sim q}[k(y,x)\,\beta(y)]$:
SVGD can be viewed as a kernel-smoothed version of our drift field, where each particle's
update is a weighted average of $\beta$ over the current distribution rather than the
pointwise value $\beta(x)$ itself.
See Appendix~\ref{app:svgd} for the full lemma, proof, corollary, and comparison table.

% ============================================================
\section{Theoretical Analysis}
\label{sec:mode_coverage}
% ============================================================

This section studies the first-order effect of one Euler step of
$\mathbf{V}^{D_f}=w(r)\beta$ on under-covered regions through the soft under-coverage functional
$U_{\delta,\varepsilon}(q)$.  The main result is a first-order variation formula for
$U_{\delta,\varepsilon}$ together with a sufficient criterion for one-step improvement.  Proofs of
all statements in this section are deferred to Appendix~\ref{app:mode_coverage_proofs}.

\subsection{Definitions}
\label{sec:coverage_defs}

We first introduce the geometric region and soft functional used to quantify under-coverage.

\begin{definition}
  For fixed thresholds $0 < \varepsilon \ll \delta$, let
  \[
    \Omega_{\delta,\varepsilon}(q) = \{x \in \R^d : p(x) \geq \delta,\; q(x) \leq \varepsilon\}
  \]
  denote the \emph{under-covered region}: points where the target has significant mass but
  the current sampler density does not. Introduce the \emph{under-coverage score}
  \[
    D_q(x) \coloneqq \log\frac{p(x)}{q(x)} = \log r(x).
  \]
  Large positive values of $D_q(x)$ indicate strong local under-coverage.
  Define the \emph{soft under-coverage mass}
  \begin{equation}
    U_{\delta,\varepsilon}(q)
    \coloneqq
    \int_{\R^d} p(x)\,\mathbbm{1}_{\{p(x)\ge \delta\}}\,\bigl[\varepsilon-q(x)\bigr]_+\,dx,
    \label{eq:soft_undercoverage}
  \end{equation}
  where $[a]_+ \coloneqq \max\{a,0\}$. This functional measures the target mass of
  under-covered regions, weighted by the amount of density deficit below the threshold
  $\varepsilon$.
  If $\Omega_{\delta,\varepsilon}(q)=\emptyset$, then $q(x) > \varepsilon$ wherever
  $p(x) \geq \delta$; coverage is declared sufficient at level $(\delta,\varepsilon)$.
  The set $\Omega_{\delta,\varepsilon}(q)$ is therefore a hard geometric notion of
  under-coverage, while $U_{\delta,\varepsilon}(q)$ is its softened, target-weighted summary.
\end{definition}

\subsection{One-Step Regional Repair}
\label{sec:regional_repair}

We now analyze the first-order response of the soft under-coverage mass under the Euler update in \ref{eq:euler_step}.
\begin{proposition}
  \label{prop:soft_undercoverage}
  Let $q_{t+h}$ be obtained from $q_t$ by one Euler step with drift field $\mathbf{V}$, and
  assume that $q_t$ and $\mathbf{V}$ are $C^1$ on $\{x:p(x)\ge \delta\}$ and that the boundary
  set $\{x:\ p(x)\ge \delta,\ q_t(x)=\varepsilon\}$ has Lebesgue measure zero. Then, formally as
  $h\to 0$,
  \begin{equation}
    U_{\delta,\varepsilon}(q_{t+h})
    =
    U_{\delta,\varepsilon}(q_t)
    - h\,G_{\mathbf{V}}(t;\delta,\varepsilon)
    + o(h).
    \label{eq:soft_undercoverage_decay}
  \end{equation}
  Here
  \begin{equation}
    G_{\mathbf{V}}(t;\delta,\varepsilon)
    \coloneqq
    \int_{\Omega_{\delta,\varepsilon}(q_t)}
    p(x)\,\bigl[-\nabla\cdot(q_t\mathbf{V})(x)\bigr]\,dx
    \label{eq:regional_repair_score}
  \end{equation}
  is the one-step regional repair score induced by the drift field $\mathbf{V}$.
  In particular, if
  $G_{\mathbf{V}}(t;\delta,\varepsilon)>0$,
  then $ U_{\delta,\varepsilon}(q_{t+h})<U_{\delta,\varepsilon}(q_t)$
  for all sufficiently small $h>0$.
\end{proposition}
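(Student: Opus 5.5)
The plan is to differentiate $U_{\delta,\varepsilon}$ along the curve $h\mapsto q_{t+h}$ and evaluate the derivative at $h=0$. This uses the first-order density change under the Euler pushforward $T_h(x)=x+h\mathbf{V}(x)$.

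\textbf{Step 1: first-order density change.} For $h$ small, $T_h$ is a $C^1$ diffeomorphism, since $\mathbf{V}$ is $C^1$ (locally, on the relevant set). The change-of-variables formula
\[
q_{t+h}(T_h(x))\,\det(I+h\nabla\mathbf{V}(x))=q_t(x)
\]
then gives, pointwise,
\[
q_{t+h}(y)=q_t(y)-h\,\nabla\cdot(q_t\mathbf{V})(y)+o(h).
\]
This is the same formal linearization that yields the continuity equation $\partial_t q+\nabla\cdot(q\mathbf{V})=0$ of Section~\ref{sec:wgf}. Write $\dot q\coloneqq-\nabla\cdot(q_t\mathbf{V})$.

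\textbf{Step 2: differentiate the positive part.} Set $\phi(a)=[\varepsilon-a]_+$. Away from $a=\varepsilon$, $\phi$ is differentiable with $\phi'(a)=-\mathbbm{1}_{\{a<\varepsilon\}}$. So for every $x$ with $p(x)\ge\delta$ and $q_t(x)\neq\varepsilon$,
\[
\phi(q_{t+h}(x))=\phi(q_t(x))-\mathbbm{1}_{\{q_t(x)<\varepsilon\}}\,h\,\dot q(x)+o(h).
\]
The exceptional set $\{p\ge\delta,\ q_t=\varepsilon\}$ is null by hypothesis, so it does not affect the integral.

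\textbf{Step 3: integrate and identify the repair score.} Multiply by $p(x)\mathbbm{1}_{\{p\ge\delta\}}$ and integrate:
\[
U_{\delta,\varepsilon}(q_{t+h})=U_{\delta,\varepsilon}(q_t)-h\int_{\{p\ge\delta,\;q_t<\varepsilon\}}p\,\dot q\,dx+o(h).
\]
The domain differs from $\Omega_{\delta,\varepsilon}(q_t)$ only by the null boundary set, so the integral equals $G_{\mathbf{V}}(t;\delta,\varepsilon)$. This proves \eqref{eq:soft_undercoverage_decay}.

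\textbf{Step 4: the strict-decrease claim.} If $G_{\mathbf{V}}>0$, choose $h_0$ with $|o(h)|<hG_{\mathbf{V}}/2$ for all $h<h_0$. Then
\[
U_{\delta,\varepsilon}(q_{t+h})\le U_{\delta,\varepsilon}(q_t)-hG_{\mathbf{V}}/2<U_{\delta,\varepsilon}(q_t).
\]

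\textbf{Main obstacle.} The hard part is justifying the exchange of limit and integral in Step~3. This needs the remainders to be dominated uniformly in $x$, for instance via $|\phi(a)-\phi(b)|\le|a-b|$ together with a uniform bound on $\dot q$ and the second-order remainder on $\{p\ge\delta\}$. A further gap is that near the boundary $q_t=\varepsilon$ the pointwise expansion holds only for $h$ small depending on $x$.

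I would close this gap by dominated convergence applied to the difference quotient $(\phi(q_{t+h})-\phi(q_t))/h$. It is bounded by $|q_{t+h}-q_t|/h$, which is uniformly controlled, and it converges almost everywhere to $-\mathbbm{1}_{\{q_t<\varepsilon\}}\dot q$. Integrability comes from $p\,\mathbbm{1}_{\{p\ge\delta\}}\le p$. Since the proposition is stated ``formally,'' I would state these regularity and integrability conditions as standing assumptions rather than prove them in full generality.
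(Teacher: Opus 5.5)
Your proposal is correct and follows the same route as the paper's proof. Both arguments take the first-order expansion $q_{t+h}=q_t-h\,\nabla\cdot(q_t\mathbf{V})+o(h)$, differentiate $[\varepsilon-\cdot\,]_+$ off the null set $\{p\ge\delta,\ q_t=\varepsilon\}$, and integrate against $p\,\mathbbm{1}_{\{p\ge\delta\}}$. The only differences are refinements: you derive the density expansion from the Jacobian of the Euler map $T_h$ rather than quoting the continuity equation, and you make explicit the dominated-convergence step for exchanging the $o(h)$ remainder with the integral, which the paper leaves implicit under ``formally.''
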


The sign condition $G_{\mathbf{V}}(t;\delta,\varepsilon)>0$ is integral rather than pointwise:
it depends on the $p$-weighted average of $-\nabla\cdot(q_t\mathbf{V})$ over
$\Omega_{\delta,\varepsilon}(q_t)$ and does not require pointwise positivity on the whole region.
It is a sufficient criterion for first-order one-step improvement of
$U_{\delta,\varepsilon}$, not a global convergence guarantee. For f-divergence drifts, this
quantity admits a simple compression--elasticity form.

\begin{proposition}
  \label{prop:regional_repair_fdiv}
  Let $\mathbf{V}^{D_f}(x)=w(r(x))\,\beta(x)$ be the drifting field of
  Theorem~\ref{thm:unified}, and assume $q_t(x)>0$ and $w(r(x))>0$ for almost every
  $x\in\Omega_{\delta,\varepsilon}(q_t)$.  On the set where $\|\beta(x)\|>0$, define the
  geometric compression index
  \begin{equation}
    \kappa_t(x)
    \coloneqq
    \frac{-\nabla\cdot(q_t\beta)(x)}{q_t(x)\,\|\beta(x)\|^2},
    \label{eq:kappa_t}
  \end{equation}
  and the divergence elasticity
  \begin{equation}
    \eta_f(r)
    \coloneqq
    \frac{r\,w'(r)}{w(r)}.
    \label{eq:eta_f}
  \end{equation}
  Then
  \begin{equation}
    G_f(t;\delta,\varepsilon)
    =
    \int_{\Omega_{\delta,\varepsilon}(q_t)}
    p(x)\,q_t(x)\,w(r(x))\,\|\beta(x)\|^2
    \bigl[\kappa_t(x)-\eta_f(r(x))\bigr]\,dx.
    \label{eq:regional_repair_elasticity}
  \end{equation}
  where the integrand is understood as zero on $\{x:\|\beta(x)\|=0\}$.
  Moreover, if $\kappa_t(x)\ge \eta_f(r(x)) \text{ for a.e. }x\in\Omega_{\delta,\varepsilon}(q_t)\cap\{\|\beta(x)\|>0\},$
  and the inequality is strict on a subset of $\Omega_{\delta,\varepsilon}(q_t)$ of positive
  measure on which $p\,q_t\,w(r)\,\|\beta\|^2>0$, then $G_f(t;\delta,\varepsilon)>0.$
\end{proposition}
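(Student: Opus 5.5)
The plan is to start from Proposition~\ref{prop:soft_undercoverage} with $\mathbf{V}=\mathbf{V}^{D_f}=w(r)\beta$. This gives $G_f(t;\delta,\varepsilon)=\int_{\Omega_{\delta,\varepsilon}(q_t)} p\,[-\nabla\cdot(q_t w(r)\beta)]\,dx$. The whole argument then reduces to a pointwise identity for the integrand, followed by a sign argument. No integration by parts is needed, so boundary terms on $\partial\Omega_{\delta,\varepsilon}(q_t)$ never enter. This is fortunate, since that boundary is only assumed to be null, not smooth.

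\textbf{Step 1 (product rule).} On the set where $q_t$, $r$ and $\mathbf{V}$ are $C^1$, expand
\[
\nabla\cdot(q_t\,w(r)\,\beta)=w(r)\,\nabla\cdot(q_t\beta)+q_t\,w'(r)\,\nabla r\cdot\beta .
\]
Since $\beta=\nabla\log r$, we have $\nabla r=r\beta$, so the second term equals $q_t\,r\,w'(r)\,\|\beta\|^2$. On $\{w(r)>0\}$ this can be written as $q_t\,w(r)\,\eta_f(r)\,\|\beta\|^2$.

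\textbf{Step 2 (insert $\kappa_t$).} On $\{\|\beta\|>0\}$, the definition \eqref{eq:kappa_t} gives $-\nabla\cdot(q_t\beta)=q_t\|\beta\|^2\kappa_t$. Combining with Step 1,
\[
-\nabla\cdot(q_t\mathbf{V}^{D_f})=q_t\,w(r)\,\|\beta\|^2\bigl[\kappa_t-\eta_f(r)\bigr].
\]
Multiplying by $p$ and integrating over $\Omega_{\delta,\varepsilon}(q_t)$ yields \eqref{eq:regional_repair_elasticity}. The hypotheses $q_t>0$ and $w(r)>0$ a.e.\ on the region make $\kappa_t$ and $\eta_f$ well-defined almost everywhere.

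\textbf{Step 3 (the set $\{\|\beta\|=0\}$).} This is the only delicate point. There the original integrand is $-p\,w(r)\,\nabla\cdot(q_t\beta)$, which need not vanish pointwise, because the divergence involves derivatives of $\beta$. The paper declares the integrand zero there by convention. To make the identity honest, I would first try to show that the critical set contributes nothing. On the interior of $\{\beta=0\}$, $\beta\equiv0$, so $\nabla\cdot(q_t\beta)=0$. The remaining part is $\{\beta=0\}$ minus its interior, and I would argue it can be absorbed into the null-set and regularity assumptions already made. Failing that, I would read \eqref{eq:regional_repair_elasticity} with the stated convention as a definition on that set and note that it agrees with $G_{\mathbf{V}}$ wherever the critical set has no interior boundary of positive measure. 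I expect this step to be the main obstacle.

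\textbf{Step 4 (sign criterion).} The weight $p\,q_t\,w(r)\,\|\beta\|^2$ is nonnegative a.e. Under the hypothesis, the bracket $\kappa_t-\eta_f(r)$ is nonnegative a.e.\ on $\Omega_{\delta,\varepsilon}(q_t)\cap\{\|\beta\|>0\}$, so the integrand is nonnegative a.e. It is strictly positive on a set of positive measure, namely the set where the inequality is strict and the weight is positive. Hence the integral is strictly positive, i.e.\ $G_f>0$. Proposition~\ref{prop:soft_undercoverage} then gives the one-step decrease of $U_{\delta,\varepsilon}$.
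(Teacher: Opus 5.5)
Your Steps 1, 2 and 4 match the paper's proof. The paper substitutes the product-rule expansion $-\nabla\cdot(q_t w(r)\beta)=-w(r)\nabla\cdot(q_t\beta)-q_t\,r\,w'(r)\|\beta\|^2$ into $G_{\mathbf V}$, factors out $q_t\,w(r)\,\|\beta\|^2$, and reads off the sign. The gap is in Step 3. You rightly flag it (the paper only asserts, without argument, that points with $\|\beta\|=0$ contribute zero), but you do not close it.

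Your first fix fails. The part of $\{\beta=0\}$ outside its interior need not be Lebesgue-null. A closed set with empty interior and positive measure, such as a fat Cantor set $K_1$ times $\R^{d-1}$, is the exact zero set of $\nabla\log r$ when $\log r=\Phi(x_1)$ with $\Phi'\ge 0$ vanishing exactly on $K_1$. None of the hypotheses constrain this set; the null-set assumption concerns $\{p\ge\delta,\ q_t=\varepsilon\}$, not the critical set. Your fallback, treating the convention as a definition, proves a different statement. The right-hand side would no longer be shown to equal $G_{\mathbf V}$ from Proposition~\ref{prop:soft_undercoverage}, so Step 4 would not deliver the decrease of $U_{\delta,\varepsilon}$.

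The missing idea is the standard fact that a function's derivatives vanish almost everywhere on its zero set. Apply it to $u_i$, the $i$-th component of the flux $q_t\mathbf{V}^{D_f}=q_t\,w(r)\,\beta$. Each $u_i$ is $C^1$ under the hypotheses of Proposition~\ref{prop:soft_undercoverage} and vanishes on $\{\beta=0\}$.
\begin{itemize}
\item Take any line parallel to $e_i$. A zero of $u_i$ that is an accumulation point of zeros on that line has $\partial_i u_i=0$: take difference quotients along those zeros.
\item Hence the Borel set $\{u_i=0,\ \partial_i u_i\neq 0\}$ meets each such line only in isolated, hence countably many, points.
\item By Fubini this set is null. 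Equivalently, near a zero where $\partial_i u_i\neq 0$, the implicit function theorem makes the zero set a $C^1$ graph.
\end{itemize}
Summing over $i$ gives $\nabla\cdot(q_t\mathbf{V}^{D_f})=0$ a.e.\ on $\{\beta=0\}$. So the original integrand $p\,[-\nabla\cdot(q_t\mathbf{V}^{D_f})]$ of $G_{\mathbf V}$ already vanishes almost everywhere there. The convention in the statement is therefore consistent rather than an extra assumption, and the identity holds with no condition on the critical set. Your Step 4 then applies to $G_{\mathbf V}$ itself.
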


\begin{table}[t]
  \centering
  \caption{Compression--elasticity summary for the standard f-divergence drifts. Here
  $\Omega_t\coloneqq \Omega_{\delta,\varepsilon}(q_t), \kappa_t(x)
    \coloneqq
    \frac{-\nabla\cdot(q_t\beta)(x)}{q_t(x)\,\|\beta(x)\|^2}$, and the corresponding weights
  $w(r)$ are listed in Table~\ref{tab:divergences}.}
  \label{tab:regional_repair_examples}
  \small
  \begin{tabular}{lccc}
    \toprule
    Divergence & $\eta_f(r)$ & Sufficient condition & regional integrand \\
    \midrule
    $\KL(q\|p)$
    & $0$
    & $\kappa_t\ge 0$ &  $p\,q_t\,\|\beta\|^2\,\kappa_t$ \\
    $\KL(p\|q)$
    & $1$
    & $\kappa_t\ge 1$ &
      $p^2\,\|\beta\|^2\,(\kappa_t-1)$ \\
    $\chi^2(p\|q)$
    & $2$
    & $\kappa_t\ge 2$ &
      $\dfrac{2p^3}{q_t}\,\|\beta\|^2\,(\kappa_t-2)$ \\
    Tsallis $T_\alpha$
    & $\alpha$
    & $\kappa_t\ge \alpha$ &
      $p\,q_t\,\alpha r^\alpha\,\|\beta\|^2\,(\kappa_t-\alpha)$ \\
    \bottomrule
  \end{tabular}
\end{table}

In every row of Table~\ref{tab:regional_repair_examples}, the displayed threshold is only the
almost-everywhere part of the sufficient condition; to conclude $G_f(t;\delta,\varepsilon)>0$,
the inequality must also be strict on a subset of $\Omega_t$ of positive measure where the listed
regional integrand is positive.

\paragraph{Geometry Intuition.}
For any fixed Lipschitz region $A$, the continuity equation gives
\[
  \frac{d}{dt}\int_A q_t(x)\,dx
  =
  -\int_{\partial A} q_t(x)\,\mathbf{V}(x)\cdot n(x)\,dS(x),
\]
so positive regional repair corresponds to positive net inward flux into the under-covered region
on average.  Proposition~\ref{prop:regional_repair_fdiv} shows how the divergence choice changes
this flux by comparing the geometric compression index $\kappa_t$ with the divergence elasticity
$\eta_f$.  Table~\ref{tab:regional_repair_examples} gives the resulting thresholds
$0,1,2,\alpha$ for Reverse KL, Forward KL, $\chi^2$, and Tsallis, respectively. This provides a
concrete criterion for comparing how different f-divergence drifts respond to the same
under-covered region.  Section~\ref{sec:exp_demos} provides a direct empirical illustration:
Figure~\ref{fig:exp_unified_drift} (bottom row) visualises $\kappa_t$, the gap
$\kappa_t-\eta_f$, and $G_f$ for Reverse KL, Forward KL, and $\chi^2$ on a converged GMM-8
checkpoint, while Figure~\ref{fig:exp_regional_repair} confirms $G_{\mathbf{V}}>0$ via a
frozen one-step probe on an early-state proxy.

\subsection{Fixed-Point Analysis}
\label{sec:fixed_point}

A final structural question is whether $\mathbf{V}_\cF(x)=\mathbf{0}$ for all $x$ forces
$p=q$, or whether degenerate fixed points can occur. For the present f-divergence drifts, the
explicit factorization $\mathbf{V}=w(r)\beta$ makes the answer immediate. 

\begin{proposition}
  \label{prop:fixed_point}
  Suppose $w(r)>0$ for all $r>0$, and assume that $p$ and $q$ are strictly positive densities on a
  connected domain and that $\beta(x)=\nabla\log(p/q)$ is well-defined. If
  $\mathbf{V}(x)=w(r(x))\beta(x)=0$ for all $x$, then $p=q$.
\end{proposition}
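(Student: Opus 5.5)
Since $w(r)>0$ for every $r>0$ and $r(x)=p(x)/q(x)$ is a strictly positive number at every point (both densities are strictly positive), the scalar factor $w(r(x))$ never vanishes. Hence $\mathbf{V}(x)=w(r(x))\,\beta(x)=0$ forces $\beta(x)=0$ for every $x$ in the domain. I will first record this pointwise cancellation.

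Next, $\beta=\nabla\log r$ vanishes identically on a connected open domain. Because $\log r$ is $C^1$ (as in the hypotheses of Theorem~\ref{thm:unified}), $\log r$ is constant there. The argument is standard: for any two points, join them by a piecewise-linear path inside the domain, which exists because connected open sets are path-connected. Integrating $\nabla\log r$ along this path gives zero, so $\log r\equiv c$ and $p=e^{c}q$.

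Finally, both $p$ and $q$ are probability densities. Integrating $p=e^{c}q$ over the domain, which carries all the mass of each density, gives $1=e^{c}\cdot 1$. Hence $c=0$ and $p=q$.

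The only delicate point is the passage from a vanishing gradient to a constant function. It needs the domain to be open and connected and $\log r$ to be differentiable, not merely to have $\nabla\log r$ defined almost everywhere. I will state that "well-defined" is read as $C^1$ on an open connected domain, consistent with Theorem~\ref{thm:unified}. If only weak differentiability were available, I would instead use the fact that a $W^{1,1}_{\mathrm{loc}}$ function with zero weak gradient on a connected open set is a.e.\ constant, which yields $p=q$ almost everywhere.
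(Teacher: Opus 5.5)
Your proof is correct and follows the same route as the paper's: cancel the strictly positive factor $w(r(x))$ to get $\beta\equiv 0$, conclude that $\log r$ is constant on the connected domain so that $p=cq$, and use normalization to force $c=1$. Your explicit path-connectedness argument and your reading of ``well-defined'' as $C^1$ (with the $W^{1,1}_{\mathrm{loc}}$ fallback) only spell out a regularity step the paper leaves implicit.
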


For the LV divergence, this direct reduction is unavailable. Its drifting field contains
additional multiplicative and additive factors, so $\mathbf{V}=0$ can occur either because
$\beta=0$ or because the prefactor vanishes, and these mechanisms may coexist on different parts
of the domain. The corresponding degenerate equilibria are summarized in
Appendix~\ref{app:fixed_points}.

\subsection{Interpreting Mode-Seeking and Mode-Covering}
\label{sec:mode_bias_interpretation}

The classical slogan that Reverse KL is mode-seeking while Forward KL is mode-covering originates
from a different setting: static divergence minimization over a restricted parametric family. For
example, when a single Gaussian is used to approximate a multimodal target, $\KL(q\|p)$ is
$q$-weighted, so modes that receive essentially no mass under the current approximation are barely
``seen'' by the objective; by contrast, $\KL(p\|q)$ is $p$-weighted, so any target mode with
non-negligible $p$-mass but insufficient $q$-coverage incurs a strong penalty. In that static
approximation setting, the slogan is therefore a statement about how the two KL objectives weight
missing modes inside a restricted family.
In the present setting, the more relevant distinction is the transient repair dynamics induced by
the drift, together with the actual sample coverage achieved during training.
More precisely, this reinterpretation concerns the present Wasserstein-gradient-flow setting and
its induced one-step repair dynamics; it is not intended as a blanket statement about all
optimization settings involving these divergences.

For the standard f-divergence drifts considered here, Proposition~\ref{prop:fixed_point} shows
that the continuum fixed point is always $p=q$ whenever $w(r)>0$. Accordingly, in our setting,
the same slogan should not be interpreted as a statement about different asymptotic targets. All
standard f-divergence drifts share the same local correction direction $\beta(x)=\nabla\log\frac{p(x)}{q(x)}$,
and differ only through the spatially varying prefactor $w(r(x))$.
At the particle level, this prefactor acts as a location-dependent effective step size:
\[
  x_{t+h}=x_t + h\,w(r(x_t))\,\beta(x_t).
\]
The substantive difference is that $w(r(x))$ varies across space through the local
ratio $r(x)=p(x)/q_t(x)$, so different divergences redistribute correction effort differently
across the current particle cloud.
The regional-response identity \eqref{eq:regional_repair_elasticity} makes this precise:
\[
  G_f(t;\delta,\varepsilon)
  =
  \int_{\Omega_t}
  p(x)\,q_t(x)\,w(r(x))\,\|\beta(x)\|^2
  \bigl[\kappa_t(x)-\eta_f(r(x))\bigr]\,dx.
\]
Here $w(r)$ determines how strongly severely under-covered regions enter the repair score, while
$\eta_f(r)=r w'(r)/w(r)$ determines the geometric threshold required for positive regional
repair. In this sense, the differences among Reverse KL, Forward KL, $\chi^2$, and Tsallis are
best understood as differences in \emph{transient regional repair bias}, rather than differences
in the final target of the flow. In practice, stronger reweighting is beneficial only when the
current sampler and density estimator already provide sufficiently informative samples in the
relevant under-covered regions; otherwise the additional prefactor may amplify ratio- and
score-estimation error instead of improving repair.

% ============================================================
\section{Data-Free Implementation}
\label{sec:implementation}
% ============================================================

In the data-free setting, implementing the standard f-divergence drifts listed in
Table~\ref{tab:divergences} requires $\nabla \log p(x) = -\nabla E(x)$ together with the
$q$-side quantities $\log q(x)$ and $\nabla \log q(x)$. We therefore view data-free
implementation for this multiplicative family as the problem of recovering the pair
$(\log q, \nabla \log q)$ along the current particle cloud. In the present version of the paper,
our primary implementation is mini-batch KDE; the same framework also naturally admits a
complementary normalizing-flow route, which we describe here as a model-based extension path.

\subsection{Kernel Density Estimation}
\label{sec:kde_impl}

The KDE implementation estimates both density and score directly from the current particles via
mini-batch kernel density estimation.
Given generated samples $\{x_j\}_{j=1}^N$, define
\[
  \hat q(x) = \frac{1}{N}\sum_{j=1}^N k(x, x_j).
\]
Then the score satisfies
\begin{equation}
  \nabla \log \hat{q}(x) = \frac{\sum_j \nabla_x k(x, x_j)}{\sum_j k(x, x_j)}.
  \label{eq:kde_score_general}
\end{equation}
For the specific kernels, we have
\begin{align}
  \text{RBF: } k(x,y) = e^{-\|x-y\|^2/\tau}:&\quad
    \nabla \log \hat{q}(x) = \frac{2}{\tau}\sum_j W_{ij}^{\mathrm{rbf}}\,(x_j - x),
    \label{eq:kde_rbf} \\
  \text{Laplace: } k(x,y) = e^{-\|x-y\|/\tau}:&\quad
    \nabla \log \hat{q}(x) \propto \sum_j W_{ij}^{\mathrm{lap}}\,\frac{(x_j - x)}{{{\|x_j - x\|}}},
    \label{eq:kde_laplace}
\end{align}
where $W_{ij}^{\mathrm{rbf}} = {\mathrm{softmax}_j(-\|x-x_j\|^2/\tau)}$ and
$W_{ij}^{\mathrm{lap}} = {\mathrm{softmax}_j(-\|x-x_j\|/\tau)}$.
The RBF estimate is \emph{exact} (the weighted mean-shift vector equals $\frac{\tau}{2}\,\nabla\log\hat{q}$),
while the Laplace estimate yields a unit-displacement-weighted analogue of $\nabla\log\hat{q}$
rather than an exact score identity.
The original Drifting Model~\citep{deng2026drifting} adopts a structurally similar form to
\eqref{eq:kde_laplace} but uses unnormalized displacements $(x_j - x)$ rather than unit vectors (see Appendix~\ref{app:recovery} for details).

This interpretation is consistent with the classical mean-shift literature, where updates move a
point toward a kernel-weighted local average~\citep{cheng1995mean}. Accordingly, the
Drifting-Model-style Laplace update based on unnormalized displacements $(x_j-x)$ can be viewed
as a reasonable mean-shift transport heuristic even though it is not the exact gradient of a
standard Laplace KDE. Its practical appeal is that it yields a direct displacement field and is
typically more numerically stable for the stop-gradient one-step training objective.

\paragraph{Optional Sinkhorn stabilization.}
For the score estimate in~\eqref{eq:kde_rbf}, plain row-softmax weights become overly local when
$\tau$ is small, which weakens inter-mode repulsion. In practice we therefore optionally replace
them by a doubly-stochastic Sinkhorn coupling before forming the mean-shift vector. This keeps
repulsive interactions more global while preserving the same particle-to-particle computational
template. The log-space iteration, its relation to the double-softmax normalization of the
original Drifting Model, and the small-bandwidth collapse analysis are given in
Appendix~\ref{app:sinkhorn}.

\subsection{Normalizing-Flow Density Models}
\label{sec:nf_impl}

The normalizing-flow route uses an NF density model to provide the same pair
$(\log q,\nabla\log q)$. In the exact-flow instantiation, one restricts $f_\theta$ itself to an invertible NF
architecture (e.g., RealNVP~\cite{dinh2016density}), whose change-of-variables formula gives
\begin{equation}
  \log q(x) = \log p_\epsilon(f_\theta^{-1}(x)) - \log \left|\det J_{f_\theta^{-1}}(x)\right|.
  \label{eq:nf_density}
\end{equation}
The score $\nabla \log q(x)$ then follows from automatic differentiation of~\eqref{eq:nf_density}
without any kernel approximation, and the unnormalized ratio
$\tilde r(x) = \exp(-E(x))/q(x)=Zr(x)$ is also directly available. For the standard weights in
Table~\ref{tab:divergences}, which are homogeneous in $r$, replacing $r$ by $\tilde r$ changes
$w(r)$ only by a global multiplicative factor, which can be absorbed into the drift scale or step
size. This is distinct from the Z-Annihilator identity of Section~\ref{sec:lv}, which applies to
centered log-ratios in the LV family.
More generally, one may fit an auxiliary NF density model $\hat q_\phi$ to the current particles;
then $\log \hat q_\phi$ and $\nabla\log \hat q_\phi$ are still available analytically, but the
resulting drifting field carries the NF learning error of $\phi$. The practical trade-off is
standard: KDE is nonparametric but bandwidth-sensitive and $O(N^2)$, whereas NF evaluation is
analytic after fitting but inherits density-model representation and optimization error. Classical
KDE bias--variance bounds and the corresponding score-optimal bandwidth scaling are standard; see
\citet{silverman1986density}, \citet{wand1995kernel}, \citet{tsybakov2009introduction}, and
\citet{gine2002rates}.
In the current paper, this NF route is included as a framework-supported implementation path; our
experiments focus on the KDE instantiation.

\subsection{Objective-Specific Implementation}
\label{sec:objective_impl}

The objectives considered in this section all fit the same sampler--estimator decomposition. The
mandatory component is a neural data sampler
$x_i = f_\theta(\epsilon_i)$ with $\epsilon_i \sim p_\epsilon$, which produces the particle cloud
on which the drifting field is evaluated. A second component, the density estimator, recovers the
$q$-side quantities from these particles. In the present paper this estimator is either the
sampler-induced KDE of Section~\ref{sec:kde_impl} or an additional NF density model as in
Section~\ref{sec:nf_impl}. Both provide the same interface
\[
  \ell_i \approx \log q(x_i),
  \qquad
  s_i \approx \nabla\log q(x_i).
\]
Accordingly, the continuum correction direction $\beta(x_i)=\nabla\log p(x_i)-\nabla\log q(x_i)$
is estimated by
\[
  \tilde\beta_i \coloneqq a_{\mathrm{attr}}\,\nabla\log p(x_i)-s_i,
\]
which reduces to the direct score-difference estimate when $a_{\mathrm{attr}}=1$. In practice,
$a_{\mathrm{attr}}$ is implemented as an attraction-weight hyperparameter (`attr\_weight') that
controls the strength of the target-side pull and can be tuned dynamically; see
Section~\ref{sec:experiments}.

\paragraph{f-divergence} For Reverse KL, the drift is simply $\mathbf{V}_i = \tilde\beta_i$,
so only the score estimate is required. 
For the remaining standard f-divergence drifts in Table~\ref{tab:divergences}, one forms
\[
  \tilde r_i \coloneqq \exp\bigl(-E(x_i)-\ell_i\bigr) = Z\,r_i,
\]
and then evaluates the raw multiplicative weights $\tilde w_i = w(\tilde r_i)$. Since the
standard weights considered here are homogeneous in $r$, the unknown normalizing constant
$Z$ contributes only a global batch factor. We therefore use a
\emph{batchwise self-normalized reweighting}:
\[
  \bar w_i
  \coloneqq
  \frac{\tilde w_i}{\frac{1}{N}\sum_{j=1}^N \tilde w_j},
  \qquad
  \mathbf{V}_i = \bar w_i\,\tilde\beta_i,
\]
up to an overall drift scale or step-size parameter. This preserves the relative reweighting
induced by the chosen divergence while removing the unknown global scale carried by $Z$.

\paragraph{LV-divergence} For LV divergences, the exact formulas of
Section~\ref{sec:lv} are less convenient in practice: Case~1 can reverse $\beta$, whereas
Cases~2 and~3 require additional target/reference quantities beyond the basic
sampler--estimator interface. We therefore use the following LV-inspired heuristic. Define
\[
  m_i \coloneqq -E(x_i)-\ell_i \approx \log\tilde r(x_i),
\]
and let $\bar m_\nu$ denote an empirical estimate of $\E_\nu[\log\tilde r]$ under the chosen
reference samples; depending on the implementation, $\nu$ may be taken as the current mini-batch,
the current $q$, or a replay-buffer reference. We then set
\[
  \mathbf{V}_i
  =
  2\Bigl(1+\bigl[m_i-\bar m_\nu\bigr]_+\Bigr)
  \tilde\beta_i.
\]
This heuristic preserves the Z-Annihilator, avoids direction reversal, and leaves the baseline
Reverse-KL drift scaled by a factor of two whenever $m_i\le \bar m_\nu$.
To incorporate the target-side correction suggested by Case~2 while remaining data-free, one may
additionally introduce the batch-normalized variant
\[
  \mathbf V_i = 2\,\bar w_i\Bigl(1+\bigl[m_i-\bar m_\nu\bigr]_+\Bigr)\tilde\beta_i.
\]

We present the high level training algorithm in Algorithm~\ref{alg:unified}.

\begin{algorithm}[t]
  \caption{Data-Free Training for a One-Step Neural Sampler}
  \label{alg:unified}
  \begin{algorithmic}[1]
    \REQUIRE Energy function $E(x)$, neural sampler $f_\theta$, prior $p_\epsilon$, batch size $N$, attraction weight $a_{\mathrm{attr}}$
    \REQUIRE Density estimator for $(\log q,\nabla\log q)$ (Sections \ref{sec:kde_impl} and \ref{sec:nf_impl})
    \REQUIRE Objective-specific rule for constructing the particlewise drift $\mathbf V_i$ (Section~\ref{sec:objective_impl})
    \FOR{each training iteration}
      \STATE Sample $\epsilon_1,\ldots,\epsilon_N \sim p_\epsilon$ and set $x_i=f_\theta(\epsilon_i)$
      \STATE Compute $\nabla\log p(x_i)=-\nabla E(x_i)$, estimate $\ell_i \approx \log q(x_i)$ and $s_i \approx \nabla\log q(x_i)$
      \STATE Form $\tilde\beta_i = a_{\mathrm{attr}}\,\nabla\log p(x_i)-s_i$
      \STATE Construct the objective-specific drifting field $\mathbf V_i$
      \STATE Compute $\cL = \dfrac{1}{N}\sum_i \bigl\|x_i - \sg\bigl(x_i + \mathbf{V}_i\bigr)\bigr\|^2$
      \STATE Update $\theta \leftarrow \theta - \eta\,\nabla_\theta \cL$
    \ENDFOR
  \end{algorithmic}
\end{algorithm}

% ============================================================
\section{Experiments}
\label{sec:experiments}
% ============================================================

We organize the experiments into four parts. Section~\ref{sec:exp_setup} describes the
benchmarks, methods, and evaluation protocol. Section~\ref{sec:exp_benchmarks} reports results
on five GMM benchmarks (Table~\ref{tab:exp_main}): GMM-8 and GMM-40 in two dimensions, and
GMM-Manymodes, GMM-2hard-16, and GMM-2hard-32 in higher dimensions.
Section~\ref{sec:exp_demos} provides two illustrative diagnostics: a visual verification of the
unified drift decomposition and a frozen one-step regional-repair probe aligned with
Section~\ref{sec:regional_repair}. Section~\ref{sec:exp_ablations} studies the implementation
choices introduced in Section~\ref{sec:objective_impl} (Table~\ref{tab:exp_ablation_kde}). Full
training details, extended tables, and additional figures are deferred to
Appendix~\ref{app:experiments_additional}.

\subsection{Experimental Setup}
\label{sec:exp_setup}

\paragraph{Benchmarks.}
We evaluate on Gaussian-mixture benchmarks spanning low-dimensional multimodal structure and
higher-dimensional mode-bridging difficulty. The two-dimensional targets, GMM-8 and GMM-40,
provide direct visual access to multimodal coverage and mode occupancy. The eight-dimensional
benchmark GMM-Manymodes tests whether the same behavior extends beyond the plane under
imbalanced mode weights. Two harder bimodal benchmarks, GMM-2hard-16 and GMM-2hard-32, place
well-separated Gaussian modes in $\mathbb{R}^{16}$ and $\mathbb{R}^{32}$ and therefore isolate
the challenge of transporting mass across a large inter-mode gap. For the two-dimensional
targets we report MMD, exact $W_1$, and mode coverage; for the higher-dimensional targets we
replace exact $W_1$ with a Sinkhorn estimate of $W_2$ and again report MMD and mode coverage.
All metrics are evaluated on generated samples against exact reference samples from the analytic
target. Precise benchmark definitions, sample counts, and metric estimators are deferred to
Appendix~\ref{app:exp_benchmarks} and Appendix~\ref{app:exp_metrics}.

\paragraph{Methods.}
We evaluate two objectives from the unified framework: Reverse KL ($w(r)=1$) and the
LV-inspired surrogate (Section~\ref{sec:objective_impl}). Unless otherwise stated, all main
experiments use Laplacian KDE as the density estimator (Section~\ref{sec:kde_impl}); Gaussian
KDE appears only in the ablation study and in selected qualitative comparisons
(Section~\ref{sec:exp_ablations}).

\paragraph{Training protocol.}
All experiments follow Algorithm~\ref{alg:unified} and use the same general one-step training
pipeline. We optimize with Adam and tune only a small set of benchmark-dependent hyperparameters,
primarily the attraction weight $a_{\mathrm{attr}}$ and the KDE bandwidth $\tau$. When a
schedule is used, $a_{\mathrm{attr}}$ is annealed from a conservative initial value to avoid
premature collapse, while $\tau$ is either held fixed or annealed from an inter-mode scale to a
within-mode scale depending on the geometry of the target. All benchmark-specific choices,
including architectures, optimizer settings, sample counts, and exact schedule endpoints, are
listed in Appendix~\ref{app:exp_training}.

\subsection{Main Results on Multimodal Benchmarks}
\label{sec:exp_benchmarks}

Tables~\ref{tab:exp_main} and~\ref{tab:exp_hd} summarize results across all five benchmarks.

\paragraph{Two-dimensional targets.}
On GMM-8, both objectives achieve complete coverage of all 8 modes across 3 seeds. Reverse KL
attains $W_1 = 0.259 \pm 0.009$ and LV achieves $W_1 = 0.270 \pm 0.014$; the difference is
small and within variance across seeds.
On GMM-40, both objectives achieve complete coverage of all 40 modes. Reverse KL attains a
lower $W_1$ ($3.306 \pm 0.293$ versus $4.543 \pm 1.560$ for LV) and better MMD ($0.049$ versus
$0.075$). The larger $W_1$ and higher variance of LV are consistent with the theoretical
picture in which the centering gate amplifies the drift in strongly under-covered regions and may
improve local mode-weight fidelity at the cost of higher variance in the effective weighting.
Across the benchmarks where both objectives remain stable, neither dominates uniformly:
Reverse KL tends to achieve lower transport cost ($W_1$/$W_2$) and MMD, while LV achieves
better mode-weight fidelity (TVD, $\mathrm{KL}_\mathrm{mode}$) on higher-dimensional targets
where under-coverage is more persistent. The exception is GMM-2hard-32, where LV collapses to
a single mode across all seeds, making Reverse KL the only valid choice on that benchmark.
The selection between objectives can therefore be guided by the relative priority of
distributional fit versus long-range transport, with the caveat that the LV centering gate
becomes unreliable when the inter-mode gap is very large.
Figure~\ref{fig:exp_lowdim} shows representative sample clouds for the two Laplacian-KDE main
variants on both 2D targets, with Gaussian-KDE runs included only as qualitative reference.

\begin{table}[t]
  \centering
  \caption{Results on 2D GMM benchmarks with Laplacian KDE.
    All entries report mean{\scriptsize\,$\pm$\,std} over 3 random seeds. $W_1$ is computed by
    exact linear programme; TVD and $\mathrm{KL}_\mathrm{mode}$ measure empirical mode-weight
    discrepancy. Best result per benchmark in \textbf{bold}.}
  \label{tab:exp_main}
  \smallskip
  \begin{tabular}{llcccc}
    \toprule
    Benchmark & Obj.
      & $W_1\!\downarrow$ & MMD$\downarrow$
      & TVD$\downarrow$ & $\mathrm{KL}_\mathrm{mode}\!\downarrow$ \\
    \midrule
    \multirow{2}{*}{GMM-8}
      & RKL
        & $\mathbf{0.259}{\scriptstyle\,\pm0.009}$
        & $\mathbf{0.000}{\scriptstyle\,\pm0.000}$
        & $0.345{\scriptstyle\,\pm0.009}$
        & $0.0006{\scriptstyle\,\pm0.0002}$ \\
      & LV
        & $0.270{\scriptstyle\,\pm0.014}$
        & $\mathbf{0.000}{\scriptstyle\,\pm0.000}$
        & $\mathbf{0.324}{\scriptstyle\,\pm0.013}$
        & $\mathbf{0.0002}{\scriptstyle\,\pm0.0000}$ \\
    \midrule
    \multirow{2}{*}{GMM-40}
      & RKL
        & $\mathbf{3.306}{\scriptstyle\,\pm0.293}$
        & $\mathbf{0.049}{\scriptstyle\,\pm0.013}$
        & $\mathbf{0.248}{\scriptstyle\,\pm0.015}$
        & $\mathbf{0.055}{\scriptstyle\,\pm0.006}$ \\
      & LV
        & $4.543{\scriptstyle\,\pm1.560}$
        & $0.075{\scriptstyle\,\pm0.035}$
        & $0.283{\scriptstyle\,\pm0.006}$
        & $0.067{\scriptstyle\,\pm0.020}$ \\
    \bottomrule
  \end{tabular}
\end{table}

\begin{figure}[t]
  \centering
  \includegraphics[width=\textwidth]{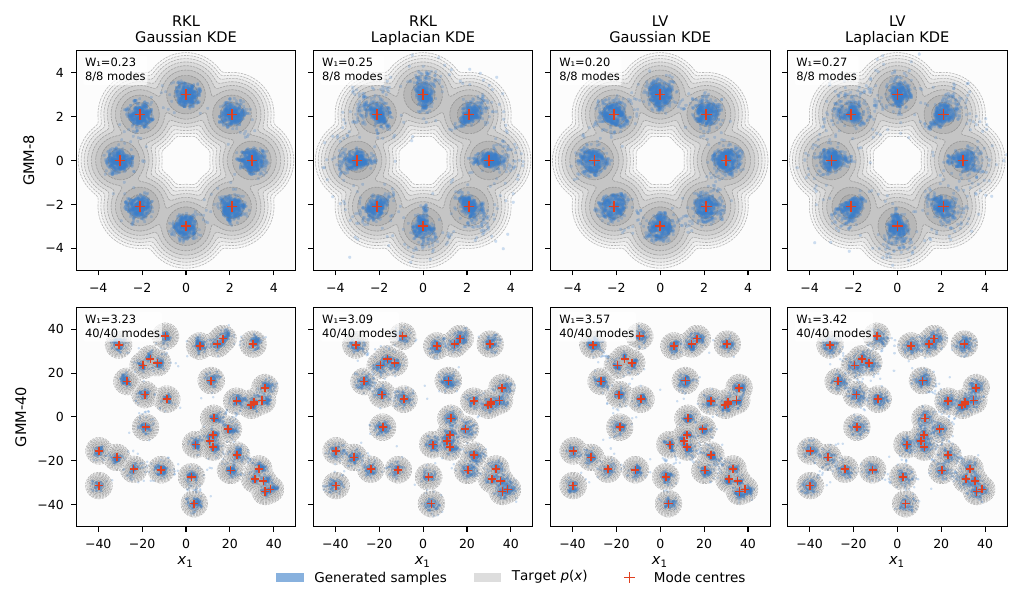}
  \caption{\textbf{Qualitative results on the two-dimensional GMM benchmarks.}
    Each panel overlays 4096 generated samples (blue) on the target log-density contours (grey),
    with mode centres marked by red crosses (\textcolor{red}{$+$}).
    Rows: GMM-8 (top, $K=8$) and GMM-40 (bottom, $K=40$).
    Columns: four qualitative reference variants — Reverse KL with Gaussian KDE, Reverse KL with
    Laplacian KDE, LV with Gaussian KDE, and LV with Laplacian KDE.
    Per-panel annotations report the best-checkpoint $W_1$ and fraction of covered modes.
    The Laplacian-KDE panels correspond to the main 2D benchmark results reported in
    Table~\ref{tab:exp_main}; the Gaussian-KDE panels are included only for qualitative
    comparison. Qualitatively, all four variants cover all modes on both benchmarks (see text).}
  \label{fig:exp_lowdim}
\end{figure}

\begin{table}[t]
  \centering
  \caption{Results on higher-dimensional GMM benchmarks with Laplacian KDE. All entries report mean{\scriptsize\,$\pm$\,std} over 3
    random seeds. $W_2$ is estimated by Sinkhorn; TVD and $\mathrm{KL}_\mathrm{mode}$ measure
    empirical mode-weight discrepancy. Best result per benchmark in \textbf{bold}.}
  \label{tab:exp_hd}
  \smallskip
  \begin{tabular}{llcccc}
    \toprule
    Benchmark & Obj.
      & $W_2\!\downarrow$ & MMD$\downarrow$
      & TVD$\downarrow$ & $\mathrm{KL}_\mathrm{mode}\!\downarrow$ \\
    \midrule
    \multirow{2}{*}{\shortstack[l]{GMM-Many\\($d=8$, $L=8$)}}
      & RKL
        & $\mathbf{1.901}{\scriptstyle\,\pm0.020}$
        & $0.174{\scriptstyle\,\pm0.035}$
        & $0.488{\scriptstyle\,\pm0.077}$
        & $0.213{\scriptstyle\,\pm0.059}$ \\
      & LV
        & $2.016{\scriptstyle\,\pm0.022}$
        & $\mathbf{0.066}{\scriptstyle\,\pm0.032}$
        & $\mathbf{0.274}{\scriptstyle\,\pm0.074}$
        & $\mathbf{0.052}{\scriptstyle\,\pm0.027}$ \\
    \midrule
    \multirow{2}{*}{\shortstack[l]{GMM-2hard\\($d=16$)}}
      & RKL
        & $\mathbf{0.281}{\scriptstyle\,\pm0.012}$
        & $0.262{\scriptstyle\,\pm0.082}$
        & $0.391{\scriptstyle\,\pm0.126}$
        & $0.086{\scriptstyle\,\pm0.055}$ \\
      & LV
        & $0.392{\scriptstyle\,\pm0.013}$
        & $\mathbf{0.215}{\scriptstyle\,\pm0.078}$
        & $\mathbf{0.284}{\scriptstyle\,\pm0.130}$
        & $\mathbf{0.050}{\scriptstyle\,\pm0.039}$ \\
    \midrule
    \multirow{2}{*}{\shortstack[l]{GMM-2hard\\($d=32$)}}
      & RKL
        & $\mathbf{0.849}{\scriptstyle\,\pm0.266}$
        & $\mathbf{0.268}{\scriptstyle\,\pm0.004}$
        & $\mathbf{0.357}{\scriptstyle\,\pm0.017}$
        & $\mathbf{0.065}{\scriptstyle\,\pm0.006}$ \\
      & LV
        & ${0.963}{\scriptstyle\,\pm0.365}$
        & ${0.279}{\scriptstyle\,\pm0.009}$
        & ${0.461}{\scriptstyle\,\pm0.023}$
        & ${0.098}{\scriptstyle\,\pm0.013}$ \\
    \bottomrule
  \end{tabular}
\end{table}

\paragraph{Higher-dimensional targets.}
On GMM-Manymodes ($d=8$, 8 geometrically imbalanced modes), both objectives achieve full mode
coverage; results are shown in Table~\ref{tab:exp_hd}. Reverse KL attains a lower $W_2$
($1.901 \pm 0.020$ versus $2.016 \pm 0.022$ for LV), while LV achieves substantially better
MMD ($0.066$ versus $0.174$) and mode-weight fidelity ($\mathrm{KL}_\mathrm{mode} = 0.052$
versus $0.213$), consistent with the picture that the centering gate amplifies corrections in
under-covered regions.
On GMM-2hard-16 ($d=16$, two modes separated by $\|\mu_1 - \mu_2\| = 8$), Laplacian KDE
successfully bridges the mode gap under both objectives. Reverse KL achieves
$W_2 = 0.281 \pm 0.012$ and LV achieves $W_2 = 0.392 \pm 0.013$; LV again shows better
mode-weight fidelity ($\mathrm{KL}_\mathrm{mode} = 0.050 \pm 0.039$ versus $0.086 \pm 0.055$).
On GMM-2hard-32 ($d=32$, $\|\mu_1-\mu_2\|=11.3$), LV collapses to a single mode
(coverage $1/2$) across all seeds, while Reverse KL produces valid samples with
$W_2 = 0.849 \pm 0.266$.
Taken together, these results confirm that the data-free one-step training procedure
requires no architectural changes as dimension increases, provided the density estimator
bandwidth is appropriately calibrated.
Taken together, these results support the same overall picture as the two-dimensional study:
Reverse KL is the more stable choice for transport across large gaps, while LV can improve
mode-weight fidelity when the estimator remains sufficiently informative in under-covered
regions.

\subsection{Illustrative Demos: Unified Drift and One-Step Regional Repair}
\label{sec:exp_demos}

\paragraph{Unified drift visualization.}
To make the unified formula $\mathbf{V}=w(r)\beta$ operationally visible, we freeze a trained
model at convergence on GMM-8 and evaluate three scalar and vector fields on a fine $80\!\times\!80$
grid: the shared correction direction $\beta=\nabla\!\log p - \nabla\!\log\hat{q}_t$, the
divergence-specific weight $w(r)$, and the resulting drift $\mathbf{V}=w(r)\beta$.
Figure~\ref{fig:exp_unified_drift} (top) organises this decomposition for three divergences.
Panel~(a) shows the single shared $\beta$ field, which is identical for all divergences.
Panels~(b)–(c) display the non-trivial weights $w(r)=r$ (Forward KL) and
$w(r)=2(1+[m-\bar{m}]_+)$ (LV surrogate); note that Reverse KL's weight $w\equiv 1$ is trivial
and omitted.  Panels~(d)–(f) show the resulting drift $\mathbf{V}$ for Reverse KL, Forward KL,
and LV on a shared magnitude scale: the direction structure is identical to $\beta$, while the
spatial amplitude pattern differs according to each $w(r)$.

Figure~\ref{fig:exp_unified_drift} (bottom) shows the compression-elasticity decomposition of
Proposition~\ref{prop:regional_repair_fdiv} evaluated at the same convergent checkpoint.
Panel~(a) shows the shared geometric compression index $\kappa_t$; panels~(b)–(d) show the
gap $\kappa_t-\eta_f$ for Reverse KL ($\eta_f=0$, $G_f=0.038>0$), Forward KL
($\eta_f=1$, $G_f=-0.286<0$), and $\chi^2$ ($\eta_f=2$, $G_f\ll 0$).  At convergence $q_t\approx p$, so $\kappa_t$ is near zero in most of the domain background,
though it retains visible structure near mode centres and along inter-mode channels
(panel~a of Figure~\ref{fig:exp_unified_drift}, bottom row, with colour range $\approx\pm 3$).
The $p$-weighted average of $\kappa_t$ over the full domain is small but positive, so only
Reverse KL's zero-elasticity threshold ($\eta_f=0$) admits a marginally positive $G_f$;
Forward KL and $\chi^2$ impose strictly higher thresholds ($\eta_f=1$ and $\eta_f=2$) that the
weighted-average $\kappa_t$ no longer meets, leading to negative repair scores.  This illustrates how the divergence elasticity
$\eta_f$ governs whether a given divergence provides repair guarantees in the
near-convergence regime.

\begin{figure}[t]
  \centering
  \includegraphics[width=\textwidth]{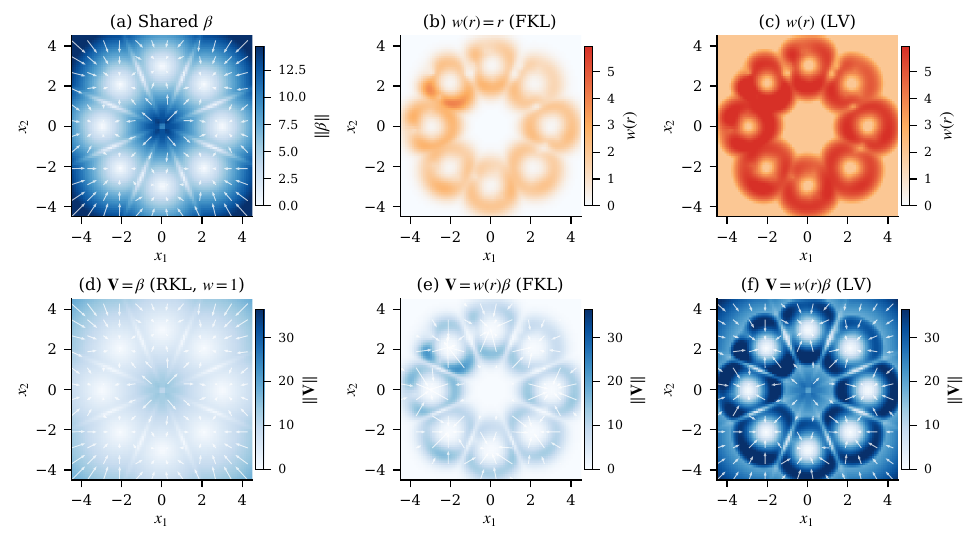}\\[4pt]
  \includegraphics[width=\textwidth]{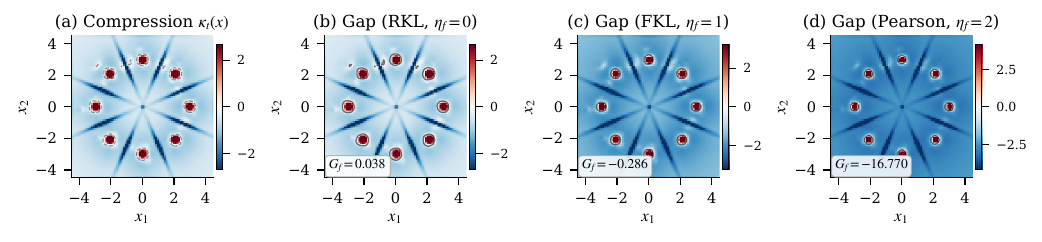}
  \caption{%
    \textbf{Unified drift decomposition on GMM-8 (converged checkpoint).}
    \emph{Top rows (a–f):}
    (a)~shared direction $\beta=\nabla\!\log p - \nabla\!\log\hat{q}_t$, identical for all
    divergences;
    (b)–(c)~divergence-specific weights $w(r)=r$ (Forward KL) and $w(r)=2(1{+}[m{-}\bar{m}]_+)$
    (LV), on a common $w$ scale; Reverse KL's $w\!\equiv\!1$ is trivial and omitted;
    (d)–(f)~resulting drift $\mathbf{V}=w(r)\beta$ for Reverse KL, Forward KL, and LV on a
    shared $\|\mathbf{V}\|$ scale, demonstrating that direction is shared while amplitude
    differs.
    \emph{Bottom row (a–d):} compression-elasticity decomposition at convergence.
    (a)~geometric compression index $\kappa_t$ (shared geometry);
    (b)–(d)~gap $\kappa_t-\eta_f$ with annotated $G_f$ for Reverse KL ($\eta_f\!=\!0$,
    $G_f\!>\!0$), Forward KL ($\eta_f\!=\!1$, $G_f\!<\!0$), and $\chi^2$ ($\eta_f\!=\!2$,
    $G_f\!\ll\!0$); the zero-crossing contour (black line) marks the boundary of the
    repair-active region.}
  \label{fig:exp_unified_drift}
\end{figure}

\paragraph{Frozen one-step regional-repair probe.}
To provide a direct empirical check of Proposition~\ref{prop:regional_repair_fdiv}, we use a set of
$N=2000$ fresh samples drawn from $\mathcal{N}(0,0.64\,I)$ as a proxy for an
early-training snapshot in which $q_t$ is concentrated near the origin and leaves the
peripheral modes of GMM-8 under-covered.  The drift field $\mathbf{V}$ is estimated
on an $80\times 80$ grid using the same Laplacian KDE as in training at bandwidth
$\tau=\tau_{\mathrm{init}}=0.5$, which corresponds to the step-0 KDE state consistent
with the early-training proxy.  The under-coverage region
$\Omega_{\delta,\varepsilon}(q_t)=\{x:p(x)\ge\delta,\;q_t(x)\le\varepsilon\}$
is identified with thresholds $\delta=0.01$ and $\varepsilon=0.01$, chosen
to be consistent with the normalised KDE density $\hat{q}_t=(1/N)\sum_i k(x,x_i)$.

Rather than computing $U_{\delta,\varepsilon}$ exactly, we verify the sufficient condition
$G_{\mathbf{V}}(t;\delta,\varepsilon)>0$ from Proposition~\ref{prop:regional_repair_fdiv}: if this
scalar is positive, the proposition guarantees $U_{\delta,\varepsilon}(q_{t+h})<U_{\delta,\varepsilon}(q_t)$
for all sufficiently small $h>0$, without requiring an explicit evaluation of
$U_{\delta,\varepsilon}$.  $G_{\mathbf{V}}$ is computed via finite-difference divergence of
$q_t\mathbf{V}$ on the grid.  We additionally apply a single frozen Euler step
$x_i^+=x_i+h\mathbf{V}(x_i)$ with $h=0.05$ and overlay the before- and after-step
particle clouds to give an intuitive picture of the mass redistribution.
Experimental parameters are collected in Appendix~\ref{app:exp_regional_repair}.

\begin{figure}[t]
  \centering
  \includegraphics[width=\textwidth]{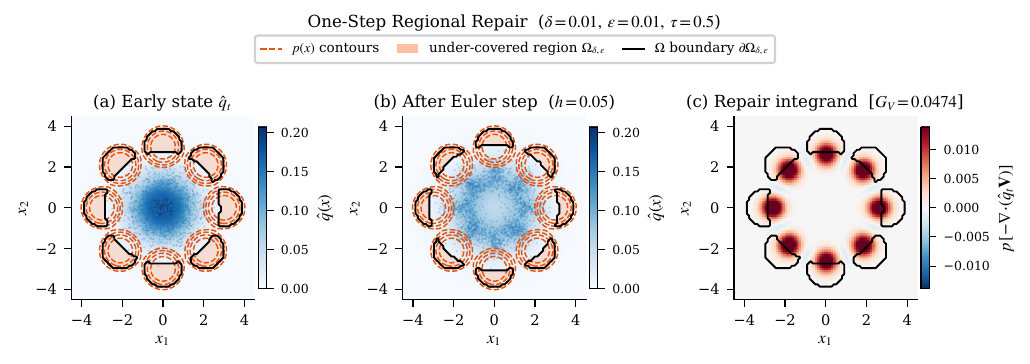}
  \caption{%
    \textbf{One-step regional-repair probe on GMM-8.}
    $\mathcal{N}(0,0.64\,I)$ particles simulate an early-training state; Reverse KL drift
    is used.  Orange shading and black boundary indicate
    $\Omega_{\delta,\varepsilon}=\{p\ge\delta,\,\hat{q}_t\le\varepsilon\}$ at the
    respective snapshot.
    (a)~Early-state KDE $\hat{q}_t$ with $\Omega$ covering all eight mode regions and $p$
    shown as dashed contours.
    (b)~KDE $\hat{q}_{t+h}$ after a single frozen Euler step ($h=0.05$): mass has moved
    outward toward the modes, visibly shrinking $\Omega$.
    (c)~Repair integrand $p[-\nabla\!\cdot\!(\hat{q}_t\mathbf{V})]$ with the early-state $\Omega$
    boundary; the annotated $G_{\mathbf{V}}=0.0474>0$ confirms
    $U_{\delta,\varepsilon}(q_{t+h})<U_{\delta,\varepsilon}(q_t)$ for any $h>0$ sufficiently
    small.}
  \label{fig:exp_regional_repair}
\end{figure}

\subsection{Ablation Studies}
\label{sec:exp_ablations}

We perform axis-level ablations on the two key hyperparameters of the main experimental
setting---the final attraction weight $a_{\mathrm{attr}}^T$ and the KDE bandwidth $\tau$---holding
all other settings fixed at the GMM-40 configuration.

\paragraph{Attraction weight and bandwidth sensitivity.}
Figure~\ref{fig:exp_ablation} summarizes the effect of independently varying $a_{\mathrm{attr}}^T$
and $\tau$ on GMM-40 (Reverse KL, Laplacian KDE), with all other settings held at the
main-experiment baseline ($a_{\mathrm{attr}}^T = 0.25$, $\tau = 1.0$).

Sweeping $a_{\mathrm{attr}}^T \in \{0.10, 0.15, 0.25, 0.40, 0.60\}$ while fixing $\tau = 1.0$
shows that all values maintain 40/40 mode coverage, but $W_1$ is minimized at the baseline
($a_{\mathrm{attr}}^T = 0.25$, $W_1 = 3.18$). Smaller values ($a_{\mathrm{attr}}^T = 0.10$)
yield lower MMD ($0.052$) yet higher $W_1$ ($4.39$), while larger values
($a_{\mathrm{attr}}^T \ge 0.40$) degrade both metrics (MMD $\ge 0.10$, $W_1 \ge 5.12$).
This is consistent with the drift structure $\mathbf{V}=w(r)\beta$: an overly strong
attraction amplifies the pull toward already-occupied modes, slowing the inter-mode mass
redistribution captured by $W_1$. Across the full range, performance degrades gradually and
no hard failure occurs.

Sweeping $\tau \in \{0.3, 0.5, 1.0, 2.0, 4.0\}$ while fixing $a_{\mathrm{attr}}^T = 0.25$
reveals a sharper asymmetry. Smaller bandwidths ($\tau = 0.3$) sharpen the KDE score and
achieve the lowest MMD ($0.023$), but sacrifice $W_1$ ($4.91$). Larger bandwidths progressively
degrade both metrics, and $\tau = 4.0$ causes complete degeneration: the over-smoothed
density loses mode-level information entirely, resulting in only 3/40 modes covered
(coverage 7.5\%). The bandwidth $\tau$ therefore exhibits a hard failure boundary on the
large-$\tau$ side, absent for $a_{\mathrm{attr}}^T$, making it the more safety-critical
hyperparameter to tune.

\paragraph{KDE variant.}
Table~\ref{tab:exp_ablation_kde} compares Gaussian and Laplacian KDE across three benchmarks
under both objectives. Under Reverse KL, Gaussian KDE achieves lower $W_1$ on GMM-8
($0.230$ vs.\ $0.259$) and lower $W_2$ on GMM-Many ($1.563$ vs.\ $1.901$), while on GMM-40
Laplacian KDE attains a marginally better $W_1$ ($3.306$ vs.\ $3.374$) at the cost of higher
MMD. Under LV, Gaussian KDE again dominates on transport cost ($W_1$/$W_2$) across all three
benchmarks, but Laplacian KDE achieves substantially better MMD on GMM-Many ($0.066$ vs.\
$0.103$), reflecting that the heavier tails of the Laplace kernel reduce over-smoothing in
the inter-mode regions. On the harder benchmarks GMM-2hard-16 and GMM-2hard-32 (not shown in
the table), Gaussian KDE diverges under both objectives, while Laplacian KDE with annealed
bandwidth successfully bridges the inter-mode gap. This suggests a practical guideline: Gaussian
KDE is often preferable on the easier low-dimensional targets we study, while Laplacian KDE is
more robust when inter-mode distances are large.

\begin{table}[t]
  \centering
  \caption{KDE variant ablation: Gaussian vs.\ Laplacian KDE on GMM-8, GMM-40, and
    GMM-Many ($d=8$, $L=8$) under both objectives. All entries report
    mean{\scriptsize\,$\pm$\,std} over 3 random seeds. Bold marks the better KDE within each
    Benchmark$\times$Objective pair. $W_1$ (exact LP) for 2D benchmarks; $W_2$ (Sinkhorn) for
    GMM-Many.}
  \label{tab:exp_ablation_kde}
  \smallskip
  \begin{tabular}{lllcc}
    \toprule
    Benchmark & Obj. & KDE & MMD$\downarrow$ & $W_1$/$W_2\!\downarrow$ \\
    \midrule
    \multirow{4}{*}{GMM-8}
      & \multirow{2}{*}{RKL}
        & Gauss   & $\mathbf{0.000}{\scriptstyle\,\pm0.000}$ & $\mathbf{0.230}{\scriptstyle\,\pm0.002}$ \\
      & & Laplace & $\mathbf{0.000}{\scriptstyle\,\pm0.000}$ & $0.259{\scriptstyle\,\pm0.009}$ \\
      & \multirow{2}{*}{LV}
        & Gauss   & $0.003{\scriptstyle\,\pm0.004}$          & $\mathbf{0.235}{\scriptstyle\,\pm0.029}$ \\
      & & Laplace & $\mathbf{0.000}{\scriptstyle\,\pm0.000}$ & $0.270{\scriptstyle\,\pm0.014}$ \\
    \midrule
    \multirow{4}{*}{GMM-40}
      & \multirow{2}{*}{RKL}
        & Gauss   & $\mathbf{0.034}{\scriptstyle\,\pm0.014}$ & $3.374{\scriptstyle\,\pm0.142}$ \\
      & & Laplace & $0.049{\scriptstyle\,\pm0.013}$          & $\mathbf{3.306}{\scriptstyle\,\pm0.293}$ \\
      & \multirow{2}{*}{LV}
        & Gauss   & $\mathbf{0.053}{\scriptstyle\,\pm0.030}$ & $\mathbf{4.219}{\scriptstyle\,\pm0.461}$ \\
      & & Laplace & $0.075{\scriptstyle\,\pm0.035}$          & $4.543{\scriptstyle\,\pm1.560}$ \\
    \midrule
    \multirow{4}{*}{\shortstack[l]{GMM-Many\\($d=8$)}}
      & \multirow{2}{*}{RKL}
        & Gauss   & $\mathbf{0.135}{\scriptstyle\,\pm0.102}$ & $\mathbf{1.563}{\scriptstyle\,\pm0.097}$ \\
      & & Laplace & $0.174{\scriptstyle\,\pm0.035}$          & $1.901{\scriptstyle\,\pm0.020}$ \\
      & \multirow{2}{*}{LV}
        & Gauss   & $0.103{\scriptstyle\,\pm0.025}$          & $\mathbf{1.557}{\scriptstyle\,\pm0.031}$ \\
      & & Laplace & $\mathbf{0.066}{\scriptstyle\,\pm0.032}$ & $2.016{\scriptstyle\,\pm0.022}$ \\
    \bottomrule
  \end{tabular}
\end{table}

\begin{figure}[t]
  \centering
  \includegraphics[width=\textwidth]{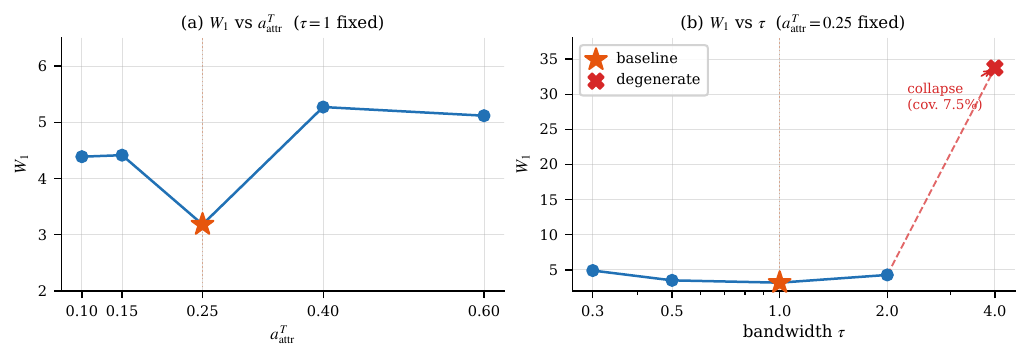}
  \caption{Sensitivity of GMM-40 performance (Reverse KL, Laplacian KDE) to the final
    attraction weight $a_{\mathrm{attr}}^T$ (a) and KDE bandwidth $\tau$ (b).
    Each axis is swept independently while the other is held at the baseline value
    ($a_{\mathrm{attr}}^T = 0.25$, $\tau = 1.0$, marked~$\star$).
    The degenerate run ($\tau = 4.0$, marked~$\times$) collapses to 7.5\% mode coverage.
    Both hyperparameters exhibit a clear working range; the baseline lies robustly within it.}
  \label{fig:exp_ablation}
\end{figure}

% ============================================================
\section{Related Work}
\label{sec:related}
% ============================================================

\textbf{Neural samplers.}
Neural samplers amortize the cost of iterative inference by training a model once against the
energy function, enabling approximate samples via a single forward pass at test time.
Normalizing flow samplers~\cite{rezende2015variational,noe2019boltzmann,wu2020stochastic} learn
invertible maps from a simple prior to the target by minimizing $\KL(q \| p)$. In the classical
static variational picture over restricted approximation families, Reverse KL is often associated
with mode-seeking behavior on multimodal targets; in practice, normalizing-flow performance is
therefore sensitive both to the objective and to the expressiveness of the invertible
architecture, which can itself limit scalability in high dimensions.
Flow Annealed Importance Sampling (FAB)~\cite{midgley2022flow} combines normalizing flows with
annealed importance sampling to partially recover coverage, and adaptive MCMC augmented with
normalizing flows~\cite{gabrie2022adaptive} uses learned flows as MCMC proposals.

Diffusion-based samplers instead learn stochastic transport trajectories from a prior to the
target via path-space objectives.
\citet{tzen2019theoretical} provide theoretical foundations by framing the
sampling problem as stochastic optimal control over latent diffusion paths.
\citet{berner2022optimal} establish the equivalence between the evidence lower
bound and the Hamilton--Jacobi--Bellman verification theorem, formally connecting diffusion
generative models to optimal control.
\citet{nusken2021solving} develop iterative diffusion optimization for
high-dimensional HJB equations, introducing a path-measure divergence framework that encompasses
a broad family of training objectives.
Building on these foundations, the Path Integral Sampler~\cite{zhangpath} casts sampling as a
stochastic optimal control problem over path space; Denoising Diffusion
Samplers~\cite{vargasdenoising} apply score-based diffusion to sampling via a learned reverse
process; Controlled Monte Carlo Diffusions~\cite{vargastransport} bridge transport and
variational inference; NETS~\cite{albergo2024nets} augments annealed importance sampling with
a learned drift trained via Fokker--Planck objectives, avoiding backpropagation through SDE
trajectories.
Training objectives have been further refined by leveraging the Log-Variance
loss~\cite{richter2024improved}, improved off-policy training~\cite{sendera2024improved}, and
iterative denoising energy matching~\cite{akhound2024iterated}.
From a geometric perspective, \citet{chemseddine2024neural} study
Fisher--Rao interpolation curves in Wasserstein geometry and identify a teleportation-of-mass
problem with linear interpolation, proposing a Langevin-flow alternative.
Despite their strong performance on multimodal benchmarks, all of these methods require
hundreds of sequential network evaluations per sample at inference time.

\textbf{Fast sampling and one-step generation.}
Reducing inference to a single function evaluation is a shared objective across the sampling
and generative modeling communities.
In the Boltzmann sampling setting, \citet{zhang2024efficient} distill a
pre-trained multi-step diffusion sampler into a one-step model via consistency
training~\cite{song2023consistency}; \citet{jutras2025one} remove the
external teacher via self-distillation but still simulate a reverse diffusion process at
training time; \citet{he2024training} train a one-step neural sampler by minimizing
the \emph{reverse diffusive KL divergence}---convolving both model and target with a
Gaussian diffusion kernel to improve mode connectivity in the noisy space---and estimate
intractable score terms via denoising score matching.
All three approaches require significant multi-step computation at training time and offer no
unified theoretical framework for coverage control.
In the data-rich generative modeling setting, consistency
models~\cite{song2023consistency} and progressive distillation~\cite{salimans2022progressive}
compress a multi-step diffusion model into a single-step generator via teacher supervision,
while MeanFlow~\cite{geng2025mean} learns mean velocity fields to achieve one-step generation
without a teacher.
The Drifting Model~\citep{deng2026drifting} takes a complementary approach: it exploits the
training-time evolution of pushforward distributions, guided by a drifting field that
continuously drives $q$ toward $p$, decoupling one-step inference from inference-time
trajectory simulation entirely.
Sinkhorn-Drifting~\citep{he2026sinkhorndriftinggenerativemodels} improves upon this by
replacing the soft coupling with a doubly-stochastic Sinkhorn transport plan, restoring
identifiability ($\mathbf{V} = 0 \Leftrightarrow p = q$) and preventing repulsion collapse
at small bandwidths.
Both remain in the data-rich setting, requiring samples from the target distribution.
Our work extends the drifting paradigm to the data-free setting and provides a unified
theoretical framework that places standard f-divergence choices under a common
Wasserstein-gradient-flow derivation.

\textbf{Stein Variational Gradient Descent.}
SVGD~\cite{liu2016stein} moves a set of particles at \emph{inference time} along the
RKHS-optimal perturbation direction
$\phi^*(x) = \mathbb{E}_{y \sim q}[k(y,x)\,\nabla_y \log p(y) + \nabla_y k(y,x)]$,
which minimizes $\KL(q \| p)$ within the RKHS unit ball.
In Section~\ref{sec:svgd_connection} we establish, via a simple integration-by-parts
identity (Lemma~\ref{lem:kernel_gradient}), that this direction equals the kernel-smoothed
version of our drifting field: $\phi^*(x) = \mathbb{E}_{y \sim q}[k(y,x)\,\beta(y)]$.
Our method recovers the unsmoothed limit $k \to \delta_x$, trading RKHS regularity for
exact pointwise evaluation of $\beta$ via autograd.
Beyond this structural difference, the two approaches operate on opposite sides of the
training/inference boundary: SVGD is a particle-based inference-time algorithm requiring
$T$ gradient evaluations per sample, whereas our framework amortizes the optimization into
a single network forward pass.

% ============================================================
\section{Conclusion}
\label{sec:conclusion}
% ============================================================

We have presented a unified framework for data-free one-step sampling grounded in the
Wasserstein gradient flow perspective. Our central contribution is the derivation that any
f-divergence functional $D_f(p \| q)$ induces a velocity field of the universal form
$\mathbf{V}(x) = w(r) \cdot \beta(x)$, where the weight function $w(r) = r^2 f''(r)$
separates a universal correction direction
$\beta(x)=\nabla\log\frac{p(x)}{q(x)}$ from an objective-dependent scalar reweighting. This
places the original Drifting Model of \citet{deng2026drifting}, the standard f-divergence
family, and the LV constructions in a common particle-transport picture.

Building on this formula, we developed a regional-response analysis centered on the soft
under-coverage mass and the associated one-step repair score. In this formulation, different
objectives are distinguished not by a change of correction direction, but by how strongly they
reweight that common direction across space. This yields a local and quantitative language for
discussing mode repair in one-step samplers and clarifies how the geometry of the current
particle cloud interacts with the chosen divergence.

On the implementation side, we showed how the same sampler--estimator interface supports the
current KDE-based data-free training pipeline and also accommodates a complementary NF-based
route as a natural extension of the framework. Within the present paper, the empirical study
focuses on the KDE instantiation. Objective-specific drifts are realized using estimated
$q$-side quantities together with batchwise normalized reweighting when the objective depends on
$r(x)$. The experiments are organized accordingly: benchmark
comparisons assess final sampling quality on multimodal Gaussian-mixture targets, illustrative
diagnostics visualize the unified drift and one-step regional repair, and ablations isolate the
roles of the KDE variant, the attraction weight, and the bandwidth.

\section{Limitations and Future Works}
Several limitations merit acknowledgment. The normalizing-flow implementation is limited by the
expressiveness of invertible architectures in high dimensions, and in the current version of the
paper this NF route is described but not yet developed into a full empirical pipeline. The
$O(N^2)$ cost of the KDE implementation is managed in practice by drawing an independent
fixed-size reference batch (controlled by $N_{\mathrm{ref}}$) for density evaluation, decoupling
evaluation cost from the training batch size. More broadly, density estimation remains the main
practical bottleneck of the data-free setting, since the realized drift quality depends directly
on the accuracy of the recovered $q$-side quantities. On the theory side, the regional-repair
and fixed-point analyses are local and structural: their sufficient conditions remain difficult
to verify in practice, and they do not yet yield full convergence or stability guarantees for the
amortized dynamics. Future directions include completing and evaluating the NF-based
implementation route, adaptive objective selection based on the current geometry of the sampler,
improved density estimators for high-dimensional energy landscapes, extensions to conditional and
structured sampling, and sharper convergence and stability guarantees for one-step training.

% ============================================================
\bibliographystyle{abbrvnat}
\bibliography{references}
% ============================================================
\newpage

% ============================================================
\appendix
\section{Additional Experimental Details and Results}
\label{app:experiments_additional}
% ============================================================

\subsection{Benchmark Definitions and Reference Distributions}
\label{app:exp_benchmarks}

All Gaussian-mixture benchmarks share the same energy form:
\[
  E(x) = -\log \sum_{k=1}^{K} w_k\,\mathcal{N}\!\bigl(x;\,\mu_k,\,\mathrm{diag}(\sigma_k^2)\bigr),
\]
where $\mathcal{N}(\cdot;\mu,\Sigma)$ denotes the Gaussian density.
Reference samples are drawn exactly from the analytic mixture ($10{,}000$ samples unless otherwise noted).

\paragraph{GMM-8.}
A two-dimensional target with $K=8$ modes arranged in a cross-and-diagonal layout:
\begin{align*}
  &\mu_{\{1,2\}} = (\pm3,\,0),\quad \mu_{\{3,4\}} = (0,\,\pm3),\\
  &\mu_{\{5,6,7,8\}} = (\pm2.1,\,\pm2.1)\quad\text{(all four sign combinations)}.
\end{align*}
All modes share the same isotropic standard deviation $\sigma=0.4$ and equal weight $w_k=1/8$.

\paragraph{GMM-40.}
A two-dimensional target with $K=40$ modes. Mode centers $\mu_k$ are drawn uniformly from $[-40,\,40]^2$ using a fixed random seed ($\texttt{seed}=42$), giving a deterministic but pseudo-random layout. All modes share isotropic standard deviation $\sigma=\mathrm{softplus}(1)\approx1.313$ and equal weight $w_k=1/40$.

\paragraph{GMM-Manymodes (8D).}
An eight-dimensional target with $K=8$ modes. Centers $\mu_k$ are drawn uniformly from $[-8,\,8]^8$ with $\texttt{seed}=42$. All modes share isotropic standard deviation $\sigma=\sqrt{0.5}\approx0.707$. Mixture weights are geometrically increasing:
\[
  w_k \propto 3^{(k-1)/7}, \quad k=1,\ldots,8,
\]
so the weight ratio between the most and least probable mode is $3{:}1$. This benchmark tests coverage of an imbalanced mode structure in higher dimension.

\paragraph{GMM-2hard (16D and 32D).}
Two bimodal targets in $\mathbb{R}^{16}$ and $\mathbb{R}^{32}$. Both have $K=2$ modes with means
\[
  \mu_1 = +\mathbf{1}_d,\qquad \mu_2 = -\mathbf{1}_d,
\]
where $\mathbf{1}_d=(1,\ldots,1)^\top\in\mathbb{R}^d$, and mixture weights $w_1=2/3$, $w_2=1/3$.
The covariance is diagonal and anisotropic, with per-dimension standard deviations
\[
  \sigma_j = \sqrt{0.05\cdot 10^{-2(d-j)/(d-1)}}, \quad j=1,\ldots,d,
\]
yielding a condition number of approximately $100$ (smallest $\sigma_j\approx0.022$, largest $\sigma_j\approx0.224$).
The mode separation $\|\mu_1-\mu_2\|=2\sqrt{d}$ grows with dimension ($8.0$ at $d=16$, $11.3$ at $d=32$), making mode-bridging the primary challenge.

\subsection{Architectures and Training Details}
\label{app:exp_training}

\paragraph{Generator architecture.}
All experiments use a residual MLP sampler (\texttt{ResMLPSampler}) with sinusoidal input
embeddings of size $128$, skip connections between every hidden layer, and a linear output layer.
All benchmarks use $5$ hidden layers of width $128$.
The latent space dimension equals the target ambient dimension for all GMM benchmarks, except
GMM-2hard-16 where a half-dimension latent ($d_z=8$ for target $d=16$) is used.
No batch normalization or layer normalization is applied; the sinusoidal embedding is sufficient
to break permutation symmetry and stabilize training.

\paragraph{Optimizer.}
We use the Adam optimizer~\citep{kingma2014adam} with $\beta_1=0.9$, $\beta_2=0.999$, $\epsilon=10^{-8}$.
The learning rate is fixed at $2\times10^{-3}$ throughout training (no scheduler) and no gradient
clipping is applied.

\paragraph{Bandwidth schedule.}
For GMM-8, the bandwidth is annealed with a cosine schedule from $\tau_{\mathrm{init}}=0.5$
(covering the inter-mode scale) to $\tau_{\mathrm{final}}=0.15$ (within-mode scale).
For GMM-40 and GMM-Manymodes a constant bandwidth calibrated to the mode spacing is used
($\tau=12$ and $\tau=25$ respectively). For GMM-2hard the bandwidth is annealed from an initial value $\tau_{\mathrm{init}}$ chosen
to cover the full mode-separation distance down to a smaller final value, using a full-range cosine
schedule; as the separation scales as $2\sqrt{d}$, the optimal $\tau_{\mathrm{init}}$ differs substantially
between the 16D and 32D variants (see Table~\ref{tab:hyperparams}).

\paragraph{Attraction-weight schedule.}
The attraction weight $a_{\mathrm{attr}}$ is annealed from a small initial value $a_0$ to a
larger final value $a_T$ using a linear or cosine schedule over a central window
$[t_{\mathrm{start}}, t_{\mathrm{stop}}] \subset [0,T]$. The initial value $a_0$ is deliberately
small (0.01--0.1) so that the repulsion term dominates during early training and particles spread
across the target support before the target-side pull begins to sharpen modes. The final value $a_T$ is set manually for all benchmarks; the GMM-2hard variants use a large
final value ($a_T=2.0$ and $4.0$ for 16D and 32D respectively) to drive the generator across
the large inter-mode gap.

\paragraph{Per-benchmark hyperparameters.}
Table~\ref{tab:hyperparams} lists the key hyperparameters for the best-performing configuration
on each benchmark reported in the main text.

\begin{table}[h]
  \centering
  \caption{Hyperparameters for each benchmark. $H$: hidden width; $L$: number of hidden layers;
    $B$: batch size; $T$: training steps; $\tau$: KDE bandwidth (init $\to$ final);
    $a_{\mathrm{attr}}$: attraction weight (init $\to$ final). ``const'' indicates a constant schedule;
    ``$\to$'' indicates a cosine or linear anneal over a central window.}
  \label{tab:hyperparams}
  \smallskip
  \resizebox{\textwidth}{!}{%
  \begin{tabular}{lllcccll}
    \toprule
    Benchmark & Obj. & Estimator & $H{\times}L$ & $B$ & $T$ & $\tau$ & $a_{\mathrm{attr}}$ \\
    \midrule
    GMM-8       & RKL & Laplace KDE & $128{\times}5$ & 4096 & 8000  & $0.5{\to}0.15$ (cosine) & $0.1{\to}0.1$ (cosine) \\
    GMM-40      & RKL & Laplace KDE & $128{\times}5$ & 4096 & 8000 & 1.0 (const)            & $0.1{\to}0.25$ (linear) \\
    GMM-40      & LV  & Laplace KDE & $128{\times}5$ & 4096 & 8000 & 1.0 (const)            & $0.1{\to}0.25$ (linear) \\
    GMM-Manymodes & RKL & Laplace KDE & $128{\times}5$ & 1024 & 2000 & 25.0 (const)        & $0.02{\to}0.04$ (lin) \\
    GMM-2hard-16 & RKL & Laplace KDE & $128{\times}5$ & 4096 & 2000 & $0.004{\to}0.006$ (cos) & $0.01{\to}2.0$ (cosine) \\
    GMM-2hard-32 & RKL & Laplace KDE & $128{\times}5$ & 8192 & 2000 & $0.001{\to}0.0015$ (cos) & $0.05{\to}4.0$ (cosine)  \\
    \bottomrule
  \end{tabular}}
\end{table}

All experiments use Adam with $\mathrm{lr}=2{\times}10^{-3}$,
precision float32, and a single GPU. Results are reported over 3 random seeds per benchmark.

\subsection{Evaluation Metrics}
\label{app:exp_metrics}

All metrics are computed between $N=2{,}000$ model samples and $N=2{,}000$ reference samples
drawn from the analytic target unless otherwise noted.

\paragraph{Maximum Mean Discrepancy (MMD).}
We use the unbiased two-sample estimator with a radial basis function kernel
$k(x,y)=\exp(-\|x-y\|^2/2\sigma^2)$ where $\sigma$ is set by the median heuristic: $\sigma^2=\mathrm{median}(\{\|x_i-y_j\|^2\})$ over cross-product pairs. The unbiased estimator is
\[
  \widehat{\mathrm{MMD}}^2(P,Q)
  = \frac{1}{N(N-1)}\sum_{i\neq j}k(x_i,x_j)
    + \frac{1}{N(N-1)}\sum_{i\neq j}k(y_i,y_j)
    - \frac{2}{N^2}\sum_{i,j}k(x_i,y_j),
\]
where $x_i\sim P$ and $y_j\sim Q$.

\paragraph{1-Wasserstein Distance ($W_1$).}
For two-dimensional targets (GMM-8 and GMM-40) we solve the exact discrete linear program using the Python Optimal Transport (POT) library~\citep{flamary2021pot}. For higher-dimensional targets, $W_1$ is not reported.

\paragraph{2-Wasserstein Distance ($W_2$).}
We compute the Sinkhorn approximation to $W_2^2$ with entropic regularization $\varepsilon=0.05$, using at most $200$ Sinkhorn iterations. We report $W_2=\sqrt{W_2^2}$. For two-dimensional targets, POT's exact solver is used when sample size permits; for higher dimensions, only the Sinkhorn estimate is reported.

\paragraph{Mode Coverage.}
Mode $k$ is declared covered if at least $1\%$ of the $2{,}000$ model samples fall within an $\ell_2$ ball of radius $r=3\sigma$ centered at $\mu_k$. Coverage is reported as the fraction of covered modes over all $K$ modes (e.g., $8/8$ or $40/40$).

\subsection{Protocol for the Unified Drift and Compression-Elasticity Figures}
\label{app:exp_unified_drift}

The unified drift figure (Figure~\ref{fig:exp_unified_drift}) is produced from a single
converged checkpoint trained on GMM-8 with Reverse KL and Laplacian KDE.

\paragraph{Checkpoint and grid.}
The generator is evaluated at the end of a dedicated visualization run (step 20\,000) with
$N=2000$ latent samples ($z\sim\mathcal{N}(0,I)$). This diagnostic run is separate from the main
benchmark tables and is used only to visualize the converged drift geometry on GMM-8. All fields
are evaluated on an $80\times 80$ grid covering
$[-4.5,4.5]^2$.  The Laplacian KDE bandwidth is held at the converged value
$\tau_{\mathrm{conv}}=\tau_{\mathrm{init}}\times\mathrm{final\_ratio}=0.5\times0.3=0.15$.

\paragraph{Field computation.}
The score $\nabla\!\log p$ is computed analytically from the GMM energy.  The KDE
log-density $\log\hat{q}_t=\log\!\sum_i k(x,x_i)-\log N$ is normalised by $N$ before
taking the score $\nabla\!\log\hat{q}_t$.  The correction direction is
$\beta=\nabla\!\log p - \nabla\!\log\hat{q}_t$.  The density ratio on the grid is
$r(x)=\exp(\log p(x)-\log\hat{q}_t(x))$.  The Forward KL weight is
$w_{\mathrm{FKL}}(r)=r$ (grid-normalised over the evaluation grid as $\bar{w}_i=w_i/\mathrm{mean}(w)$ to put it on
a comparable scale to $w\equiv 1$); the LV weight is
$w_{\mathrm{LV}}=2(1+[m-\bar m]_+)$ where $m=\log r$ and $\bar m$ is the mean over all $80\times80$ grid points.
The three drift fields are $\mathbf{V}=w\cdot\beta$ for $w\in\{1,\bar{w}_{\mathrm{FKL}},w_{\mathrm{LV}}\}$.

\paragraph{Compression-elasticity computation.}
The geometric compression index is
$\kappa_t(x)=-\nabla\!\cdot\!(\hat{q}_t\beta)(x)\;/\;\hat{q}_t(x)\,\|\beta(x)\|^2$,
evaluated via central-difference divergence (one-sided at boundary cells).
The divergence elasticities are $\eta_f\in\{0,1,2\}$ for Reverse KL, Forward KL, and
$\chi^2$, respectively (all constants; see Table~\ref{tab:regional_repair_examples}).
The gap $\kappa_t-\eta_f$ is plotted on a per-divergence 96th-percentile symmetric colour
scale with $\mathrm{RdBu\_r}$; the zero-crossing contour is overlaid in black.
The scalar $G_f$ is the Riemann sum
$G_f=\sum_{x\in\text{grid}}p(x)\,\hat{q}_t(x)\,w(r(x))\,\|\beta(x)\|^2\,[\kappa_t(x)-\eta_f]\cdot\Delta x\,\Delta y$.

\subsection{Protocol for the Frozen One-Step Regional-Repair Probe}
\label{app:exp_regional_repair}

The frozen one-step probe in Section~\ref{sec:exp_demos} uses the following concrete choices.

\paragraph{Snapshot selection.}
Rather than extracting an intermediate training checkpoint, we use $N=2000$ particles drawn
i.i.d.\ from $\mathcal{N}(0,0.64\,I_2)$ as a proxy for an early-training state.
This distribution is concentrated near the origin and leaves all eight modes of GMM-8 strongly
under-covered, making the repair signal clearly visible.

\paragraph{Drift estimation.}
The drift $\mathbf{V}(x)=\beta(x)=\nabla\!\log p(x)-\nabla\!\log q_t(x)$ is evaluated on an
$80\times 80$ grid covering $[-4.5,4.5]^2$.  The score $\nabla\!\log p$ is computed analytically
from the GMM energy.  The KDE score $\nabla\!\log\hat{q}_t$ is obtained from the Laplacian KDE
used in training with bandwidth fixed at $\tau=\tau_{\mathrm{init}}=0.5$, reflecting the
step-0 KDE state that is consistent with the early-training proxy particles.
No generator forward pass is required.

\paragraph{Under-coverage thresholds.}
The region $\Omega_{\delta,\varepsilon}$ is identified with $\delta=0.01$ and
$\varepsilon=0.01$.  The KDE density is normalised as
$\hat{q}_t(x)=(1/N)\sum_{i=1}^N k(x,x_i)$ so that $\hat{q}_t$ integrates to approximately
unity; $\varepsilon$ is set relative to this normalised scale.  With $N(0,0.64I)$ proxy
particles and bandwidth $\tau=0.5$ (Laplacian kernel), the normalised KDE density at
the GMM-8 mode centres ($r\approx3$) is $\hat{q}_t\approx0.002<\varepsilon$, so all eight
mode bodies fall inside $\Omega$; the origin region has $\hat{q}_t\approx0.03>\varepsilon$
and is correctly excluded as already well-covered.

\paragraph{Repair score computation.}
$G_{\mathbf{V}}(t;\delta,\varepsilon)=\int_{\Omega}p(x)[-\nabla\!\cdot\!(\hat{q}_t\mathbf{V})(x)]\,dx$
is approximated on the grid via central-difference divergence of the vector field
$\hat{q}_t(x)\mathbf{V}(x)$ (one-sided differences at boundary cells), followed by a masked
Riemann sum $\sum_{x\in\Omega}\mathrm{integrand}(x)\cdot\Delta x\,\Delta y$ over grid cells
in $\Omega_{\delta,\varepsilon}$.

\paragraph{Euler step.}
A single frozen Euler step $x_i^+=x_i+h\,\mathbf{V}(x_i)$ with $h=0.05$ is applied at
each particle position; network weights are not updated.  The KDE density
$\hat{q}_{t+h}=(1/N)\sum_i k(\cdot,x_i^+)$ of the transported particles is re-evaluated on
the same grid.  A separate after-step mask
$\Omega_{\delta,\varepsilon}(q_{t+h})=\{p\ge\delta,\,\hat{q}_{t+h}\le\varepsilon\}$
is computed from $\hat{q}_{t+h}$; comparing this to the before-step mask visually confirms
that $\Omega$ shrinks after one step (Figure~\ref{fig:exp_regional_repair}b).
Panel~(c) uses the before-step mask throughout, since $G_{\mathbf{V}}$ is defined as an
integral over $\Omega_{\delta,\varepsilon}(q_t)$.

\section{Proof of Theorem 1 and the Reverse-KL/Fokker--Planck Special Case}
\label{app:proof_unified}
% ============================================================

\subsection{General f-Divergence Derivation}
We provide the full proof of Theorem~\ref{thm:unified}. Let
\[
  \cF(q)=D_f(p\|q)=\int q(x)\,f\!\left(\frac{p(x)}{q(x)}\right)\,dx,
  \qquad r(x)=\frac{p(x)}{q(x)}.
\]
For an admissible perturbation $q\mapsto q+\varepsilon\phi$,
\begin{align}
  \frac{d}{d\varepsilon}\bigg|_{\varepsilon=0} D_f(p\|q+\varepsilon\phi)
  &= \int \phi(x)\,\frac{\delta D_f}{\delta q(x)}\,dx.
\end{align}
Differentiating the integrand gives
\begin{align}
  \frac{d}{d\varepsilon}\bigg|_{\varepsilon=0}
  (q+\varepsilon\phi)\,f\!\left(\frac{p}{q+\varepsilon\phi}\right)
  &= \phi f(r) + q f'(r)\left(-\frac{p\phi}{q^2}\right) \notag\\
  &= \phi\bigl[f(r)-r f'(r)\bigr],
\end{align}
and therefore
\begin{equation}
  \frac{\delta D_f}{\delta q}(x)=f(r(x)) - r(x) f'(r(x)).
\end{equation}
By Otto's calculus,
\[
  \mathbf{V}(x)=-\nabla_x\frac{\delta D_f}{\delta q(x)}.
\]
If $g(r)=f(r)-r f'(r)$, then $g'(r)=-r f''(r)$, hence
\begin{align}
  \mathbf{V}(x)
  &= -\nabla_x g(r(x))
   = -g'(r(x))\,\nabla_x r(x)
   = r(x) f''(r(x))\,\nabla_x r(x).
\end{align}
Using $\nabla_x r(x)=r(x)\nabla\log r(x)=r(x)\beta(x)$, we obtain
\begin{align}
  \mathbf{V}(x)
  = r(x)^2 f''(r(x))\,\beta(x)
  = w(r(x))\,\beta(x),
\end{align}
which proves Theorem~\ref{thm:unified}. $\square$

\subsection{Special Case: Reverse KL and the Fokker--Planck Equation}

For Reverse KL, the $f$-divergence generator is $f(r)=-\log r$, so
\[
  f''(r)=\frac{1}{r^2},
  \qquad
  w(r)=r^2 f''(r)=1.
\]
Theorem~\ref{thm:unified} therefore yields the Wasserstein gradient-flow velocity field
\begin{equation}
  V_t(x)=\beta(x)=\nabla\log p(x)-\nabla\log q_t(x)
  = -\nabla E(x)-\nabla\log q_t(x).
  \label{eq:kl_velocity}
\end{equation}
Substituting this into the continuity equation
\[
  \partial_t q_t + \nabla\cdot(q_tV_t)=0
\]
gives
\begin{align}
  q_tV_t
  &= q_t\bigl(-\nabla E-\nabla\log q_t\bigr)
   = -q_t\nabla E - \nabla q_t,
\end{align}
and hence
\begin{equation}
  \partial_t q_t
  = \nabla\cdot(q_t\nabla E) + \Delta q_t.
  \label{eq:fokker_planck_derived}
\end{equation}
This is exactly the Kolmogorov forward (Fokker--Planck) equation associated with the
overdamped Langevin SDE
\[
  dX_t = -\nabla E(X_t)\,dt + \sqrt{2}\,dW_t.
\]

\begin{remark}[Stationary distribution]
  Setting $\partial_t q_t = 0$ in~\eqref{eq:fokker_planck_derived} requires
  $\nabla\cdot(\nabla q + q\nabla E)=0$. Under the standard zero-flux boundary condition,
  this is satisfied by $q \propto e^{-E} = p$, confirming that $p$ is a stationary
  distribution of the flow.
\end{remark}

% ============================================================
\section{LV Divergence: Full Derivations}
\label{app:lv}
% ============================================================

\subsection{Case 1: $\nu = q$}

Let $\ell(x) \coloneqq \log\tilde{r}(x)$ and $\mu_q \coloneqq \E_q[\ell]$ for brevity.
The functional $\cL(q) = \E_q[\ell^2] - \mu_q^2$ has both the integration measure $q$ and
the integrand $\ell(x) = \log\tilde{p}(x) - \log q(x)$ depending on $q$. Under perturbation
$q \to q + \epsilon \delta q$, we have $\delta \ell = -\delta q / q$. Computing:
\begin{align}
  \delta \E_q[\ell^2] &= \int (\ell^2 - 2\ell) \delta q\, dx, \\
  \delta (\mu_q^2) &= 2\mu_q \int (\ell - 1) \delta q\, dx.
\end{align}
Hence, up to an additive constant independent of $x$,
\begin{align}
  \frac{\delta \cL}{\delta q}
  &= \ell^2 - 2\ell - 2\mu_q(\ell-1) \notag \\
  &= (\ell-\mu_q)^2 - 2(\ell-\mu_q) + \text{const}.
\end{align}
Taking the negative spatial gradient and discarding additive constants:
\begin{align}
  \mathbf{V}^{\nu=q}(x) = -\nabla_x\!\bigl[(\ell - \mu_q)^2 - 2(\ell - \mu_q)\bigr]
  = 2\bigl[1 - (\log\tilde{r}(x) - \E_q[\log\tilde{r}])\bigr] \cdot \beta(x),
\end{align}
as stated in Eq.~\eqref{eq:lv_case1}. $\square$

\subsection{Case 2: $\nu = p$}

With $\nu = p$ independent of $q$, only $\delta \ell = -\delta q/q$ varies:
\begin{align}
  \delta \E_p[\ell^2] &= -2\int \frac{p}{q} \ell\, \delta q\,dx, \quad
  \delta \bigl(\E_p[\ell]^2\bigr) = -2\E_p[\ell] \int \frac{p}{q} \delta q\,dx.
\end{align}
Thus $\frac{\delta \cL}{\delta q} = -\frac{2p}{q}(\ell - \E_p[\ell])$.
Taking the negative spatial gradient of $\frac{p}{q}(\ell - \E_p[\ell])$ via product rule:
\begin{align}
  \mathbf{V}^{\nu=p}(x) = 2 \frac{p(x)}{q(x)} \bigl[1 + (\log\tilde{r}(x) - \E_p[\log\tilde{r}])\bigr] \cdot \beta(x). \quad \square
\end{align}

\subsection{Case 3: $\nu$ independent of $q$}

Let $\mu_\nu \coloneqq \E_\nu[\ell]$, where now $\nu$ is fixed independently of the current
$q$. As in Case~2, only the integrand varies under perturbation, so
\begin{align}
  \delta \E_\nu[\ell^2] &= -2\int \frac{\nu}{q}\,\ell\,\delta q\,dx, \\
  \delta \bigl(\E_\nu[\ell]^2\bigr) &= -2\mu_\nu \int \frac{\nu}{q}\,\delta q\,dx.
\end{align}
Hence
\begin{align}
  \frac{\delta \cL}{\delta q}
  = -2\frac{\nu}{q}\,(\ell-\mu_\nu).
\end{align}
Taking the negative spatial gradient and applying the product rule gives
\begin{align}
  \mathbf{V}^{\nu}(x)
  &= -\nabla_x\!\left[-2\frac{\nu(x)}{q(x)}(\ell(x)-\mu_\nu)\right] \notag\\
  &= 2\nabla_x\!\left[\frac{\nu(x)}{q(x)}(\ell(x)-\mu_\nu)\right] \notag\\
  &= 2\left[
      \frac{\nu(x)}{q(x)}\,\nabla_x\ell(x)
      + (\ell(x)-\mu_\nu)\,\nabla_x\frac{\nu(x)}{q(x)}
    \right] \notag\\
  &= 2\frac{\nu(x)}{q(x)}\left[
      \beta(x)
      + (\log\tilde{r}(x)-\E_\nu[\log\tilde{r}])\,\nabla_x\log\frac{\nu(x)}{q(x)}
    \right],
\end{align}
which is Eq.~\eqref{eq:lv_case3}. In the stop-gradient special case $\nu=q_{\mathrm{sg}}$,
one has $\nu/q\equiv 1$, so the shape-correction term vanishes and the field reduces to
$2\beta(x)$. $\square$

\subsection{Z-Annihilator}
\label{app:z_annihilator}

Since $\log\tilde{r}(x) = \log r(x) + \log Z$, the centered quantity satisfies:
\begin{align}
  \log\tilde{r}(x) - \E_\nu[\log\tilde{r}]
  &= \bigl(\log r(x) + \log Z\bigr) - \bigl(\E_\nu[\log r] + \log Z\bigr) \notag \\
  &= \log r(x) - \E_\nu[\log r].
\end{align}
Hence the unknown normalizing constant $Z$ cancels exactly after centering. This removes the
partition-function ambiguity, although practical tractability still depends on whether the
remaining expectation under $\nu$ can be approximated in the data-free setting. $\square$

% ============================================================
\section{KDE Scores and Recovery of the Drifting Model}
\label{app:recovery}
% ============================================================

Setting $\cF = \KL(q\|p)$ gives $w(r) = 1$ and hence
\[
  \mathbf{V}(x)
  =
  \beta(x)
  =
  \nabla\log r(x)
  =
  \nabla\log p(x)-\nabla\log q(x)
  =
  -\nabla E(x)-\nabla\log q(x).
\]
To connect this field with the original Drifting Model, replace $q$ by the mini-batch KDE
\[
  \hat q(x) = \frac{1}{N}\sum_{j=1}^N k(x,x_j).
\]
Its score always has the form
\[
  \nabla_x \log \hat q(x)
  =
  \frac{\sum_j \nabla_x k(x,x_j)}{\sum_j k(x,x_j)}.
\]
The empirical Reverse-KL drift is therefore
\[
  \widehat{\mathbf V}(x) = -\nabla E(x)-\nabla\log\hat q(x).
\]

\paragraph{Gaussian kernel (exact score).}
For $k(x,y) = \exp(-\|x-y\|^2/\tau)$, the kernel gradient is
\[
  \nabla_x k(x,y) = k(x,y)\,\frac{2(y-x)}{\tau}.
\]
Substituting this identity gives
\begin{align}
  \nabla_x \log \hat{q}(x)
  &= \frac{\sum_j \nabla_x k(x,x_j)}{\sum_j k(x,x_j)}
   = \frac{\sum_j k(x,x_j)\,\tfrac{2(x_j - x)}{\tau}}{\sum_j k(x,x_j)}
   = \frac{2}{\tau} \sum_j W_j\,(x_j - x)
   = \frac{2}{\tau}\,m_q(x),
\end{align}
where $W_j = \mathrm{softmax}_j(-\|x-x_j\|^2/\tau)$ and $m_q(x) \coloneqq \sum_j W_j(x_j - x)$
is the \emph{mean-shift vector}. The equality is exact: no approximation is involved.
The drifting field therefore becomes
\begin{equation}
  \widehat{\mathbf V}(x) = -\nabla E(x) - \frac{2}{\tau}\,m_q(x).
  \label{eq:recovery_gaussian}
\end{equation}
Thus the Gaussian mean-shift formula used in our Reverse-KL discussion is exactly the score of the
finite-sample RBF KDE.

\paragraph{Laplace kernel.}
For $k(x,y)=\exp(-\|x-y\|/\tau)$,
\[
  \nabla_x k(x,y)
  =
  k(x,y)\,\frac{y-x}{\tau\|y-x\|},
\]
so the exact Laplace-KDE score is
\begin{align}
  \nabla_x \log \hat q(x)
  =
  \frac{1}{\tau}
  \sum_j W_j^{\mathrm{lap}}\,
  \frac{x_j-x}{\|x_j-x\|},
\end{align}
where $W_j^{\mathrm{lap}}=\text{softmax}_j(-\|x-x_j\|/\tau)$.
The original Drifting Model~\citep{deng2026drifting} uses the same Laplace weights but computes
mean-shift with \emph{unnormalized} displacements:
\begin{align}
  m_q(x_i) = \sum_j \mathrm{softmax}_j\!\left(-\frac{\|x_i - x_j\|}{\tau}\right)(x_j - x_i).
\end{align}
Because the exact Laplace score uses unit displacements whereas the Drifting Model uses full
displacements, the latter does not recover $\nabla\log\hat q$ exactly; the two vectors are in
general neither proportional nor aligned. The Drifting Model's repulsion therefore combines
Laplace-based weights with the unnormalized displacement structure familiar from RBF mean-shift
and should be interpreted as a related heuristic realization rather than an exact KDE-score
identity.

\paragraph{Conclusion.}
The Drifting Model of \citet{deng2026drifting} uses the Laplace mean-shift
$m_q$ (unnormalized displacements, Laplace weights) combined with the stop-gradient
objective~\eqref{eq:drifting_loss}.
Substituting the RBF kernel instead yields the exact score
($\nabla\log\hat{q} = \frac{2}{\tau}m_q^{\mathrm{rbf}}$); the Laplace mean-shift is an
approximation that does not correspond to any standard KDE gradient.
In both cases, the underlying functional is Reverse KL ($w = 1$), placing the Drifting
Model at the Reverse-KL endpoint, which also appears as the constant-weight
$\alpha\to 0^+$ limit of the Tsallis family. $\square$

% ============================================================
\section{Connection to Stein Variational Gradient Descent}
\label{app:svgd}
% ============================================================

Our drift field $\beta(x) = \nabla \log r(x)$ bears a precise mathematical
relationship to the optimal perturbation direction of Stein Variational Gradient Descent
(SVGD)~\cite{liu2016stein}. We establish this through a kernel gradient identity.

\begin{lemma}[Kernel Gradient Identity]
  \label{lem:kernel_gradient}
  Let $q$ be a smooth density on $\R^d$ and $k: \R^d \times \R^d \to \R$ a smooth kernel
  satisfying $k(y, x)\,q(y) \to 0$ as $\|y\| \to \infty$. Then for any fixed $x \in \R^d$:
  \begin{equation}
    \mathbb{E}_{y \sim q}\bigl[\nabla_y k(y, x)\bigr]
    = -\,\mathbb{E}_{y \sim q}\bigl[k(y, x)\,\nabla_y \log q(y)\bigr].
    \label{eq:kernel_gradient}
  \end{equation}
\end{lemma}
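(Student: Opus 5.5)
The identity follows from integration by parts (the divergence theorem) applied componentwise to the product $k(y,x)\,q(y)$, viewed as a function of $y$ with $x$ held fixed. The plan is to rewrite both sides as Lebesgue integrals against $dy$ and show that their sum is the integral of a total derivative, which vanishes under the stated decay hypothesis.

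\textbf{Step 1: rewrite the right-hand side.} Since $q$ is a density, wherever $q(y)>0$ we have $q(y)\nabla_y\log q(y)=\nabla_y q(y)$. Hence
\[
  \mathbb{E}_{y\sim q}\bigl[k(y,x)\nabla_y\log q(y)\bigr]=\int k(y,x)\,\nabla_y q(y)\,dy .
\]
Points where $q=0$ contribute nothing to the expectation, so the identity holds there as well. If $\nabla q$ does not vanish on $\{q=0\}$, one can either assume $q>0$, as the paper does throughout, or note that the expectation is naturally defined via $\nabla q$.

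\textbf{Step 2: apply the product rule.} This gives
\[
  \nabla_y\bigl(k(y,x)q(y)\bigr)=q(y)\nabla_y k(y,x)+k(y,x)\nabla_y q(y).
\]
Summing the left-hand side of \eqref{eq:kernel_gradient} with the rewritten right-hand side therefore reduces the claim to
\[
  \int_{\R^d}\nabla_y\bigl(k(y,x)q(y)\bigr)\,dy=0 .
\]

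\textbf{Step 3: show the boundary integral vanishes.} For each coordinate $i$, integrate over the ball $B_R$ and apply the divergence theorem to the vector field $k\,q\,e_i$:
\[
  \int_{B_R}\partial_{y_i}(kq)\,dy=\int_{\partial B_R}k\,q\,n_i\,dS .
\]
Then let $R\to\infty$.

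This last step is the main obstacle. The hypothesis that $k(y,x)q(y)\to0$ pointwise does not by itself control a surface integral whose area grows like $R^{d-1}$. I would handle it in one of two ways:
\begin{itemize}
  \item Interpret the hypothesis as decay fast enough that $R^{d-1}\sup_{\partial B_R}|kq|\to0$, which holds for Gaussian or Laplace kernels and for densities with light tails.
  \item Alternatively, assume integrability of $q\nabla k$ and $k\nabla q$, so that both expectations exist. Then choose radii $R_n\to\infty$ along which the surface integrals tend to zero. This is possible because $\int_0^\infty\int_{\partial B_R}|kq|\,dS\,dR=\int|kq|\,dy<\infty$ whenever $kq$ is integrable, which holds for instance when $k$ is bounded.
\end{itemize}
Either way, the boundary term tends to zero and dominated convergence passes the limit inside the volume integrals, yielding \eqref{eq:kernel_gradient}. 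I would state the stronger decay condition explicitly as the standard meaning of the hypothesis, as is customary in the SVGD literature.
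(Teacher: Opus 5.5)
Your proposal is correct and follows the paper's route: integrate by parts, discard the boundary term, and use $\nabla_y q = q\,\nabla_y\log q$. Your handling of the boundary term is more careful than the paper's one-line $[k\,q]_{\partial}=0$. However, you do not need to strengthen the hypothesis. Once $q\,\nabla_y k$ and $k\,\nabla_y q$ are integrable (implicit in both expectations being defined), Fubini plus the one-dimensional fundamental theorem of calculus along each coordinate line gives $\int_{\R^d}\partial_{y_i}(k\,q)\,dy=0$ directly from $k(y,x)\,q(y)\to 0$ as $\|y\|\to\infty$.
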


\begin{proof}
  Integration by parts:
  \begin{align}
    \mathbb{E}_{y \sim q}[\nabla_y k(y, x)]
    &= \int \nabla_y k(y, x)\cdot q(y)\,dy
     = -\int k(y, x)\,\nabla_y q(y)\,dy
       + \underbrace{\bigl[k(y,x)\,q(y)\bigr]_{\partial}}_{=\,0} \notag \\
    &= -\int k(y, x)\,\nabla_y \log q(y)\,q(y)\,dy
     = -\,\mathbb{E}_{y \sim q}[k(y, x)\,\nabla_y \log q(y)]. \qedhere
  \end{align}
\end{proof}

\begin{corollary}[SVGD as Kernel-Smoothed Drift Field]
  \label{cor:svgd}
  Under the conditions of Lemma~\ref{lem:kernel_gradient}, the SVGD optimal perturbation
  direction (\citet{liu2016stein}, Lemma~3.2) satisfies:
  \begin{equation}
    \phi^*_{q,p}(x)
    = \mathbb{E}_{y \sim q}\bigl[k(y, x)\,\nabla_y \log r(y)\bigr]
    = \mathbb{E}_{y \sim q}\bigl[k(y, x)\,\beta(y)\bigr].
    \label{eq:svgd_drift}
  \end{equation}
  That is, $\phi^*_{q,p}(x)$ is the kernel-weighted expectation of $\beta$ under $q$.
\end{corollary}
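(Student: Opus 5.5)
The plan is to recall the SVGD optimal direction from \citet{liu2016stein}, Lemma~3.2, in its standard form
\[
  \phi^*_{q,p}(x)=\mathbb{E}_{y\sim q}\bigl[k(y,x)\,\nabla_y\log p(y)+\nabla_y k(y,x)\bigr],
\]
as quoted in Section~\ref{sec:related}, and then use Lemma~\ref{lem:kernel_gradient} to rewrite the repulsive term. Since the corollary is placed under the hypotheses of that lemma (smooth $q$ and $k$, and $k(y,x)q(y)\to0$ as $\|y\|\to\infty$), the lemma applies directly at each fixed $x$.

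First, split the expectation into its two parts by linearity. This requires both $\mathbb{E}_q[k(y,x)\nabla_y\log p(y)]$ and $\mathbb{E}_q[\nabla_y k(y,x)]$ to be finite. I will record integrability as an implicit standing assumption, exactly as Liu and Wang do when stating their Lemma~3.2.

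Second, substitute Eq.~\eqref{eq:kernel_gradient}, which gives $\mathbb{E}_q[\nabla_y k(y,x)]=-\mathbb{E}_q[k(y,x)\nabla_y\log q(y)]$. Recombining the two expectations yields
\[
  \phi^*_{q,p}(x)=\mathbb{E}_{y\sim q}\bigl[k(y,x)\bigl(\nabla_y\log p(y)-\nabla_y\log q(y)\bigr)\bigr].
\]

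Third, identify the bracket. By the definition of $\beta$ in Theorem~\ref{thm:unified}, $\nabla\log p-\nabla\log q=\nabla\log r=\beta$, which gives both equalities in Eq.~\eqref{eq:svgd_drift}.

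The only delicate point is the legitimacy of the integration by parts. In Lemma~\ref{lem:kernel_gradient} this was handled through the vanishing boundary term. Here I also need $k\,\nabla\log q\,q=k\nabla q$ to be integrable, so that recombining the two expectations is valid. I would state this as part of "the conditions of Lemma~\ref{lem:kernel_gradient}" and not attempt anything sharper. For concreteness, it is satisfied, for example, by a bounded kernel together with $\nabla q\in L^1$.
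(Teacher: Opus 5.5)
Your proposal is correct and follows the paper's own route. Start from Liu and Wang's form of $\phi^*_{q,p}$, use Lemma~\ref{lem:kernel_gradient} to replace $\mathbb{E}_{y\sim q}[\nabla_y k(y,x)]$ by $-\mathbb{E}_{y\sim q}[k(y,x)\nabla_y\log q(y)]$, and recombine to get $\mathbb{E}_{y\sim q}[k(y,x)\beta(y)]$. You also make explicit the integrability requirements ($q\,\nabla_y k$ and $k\,\nabla q$ in $L^1$) that the paper leaves implicit; together with the decay hypothesis, these are what justify the integration by parts in the lemma.
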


SVGD moves each particle along a \emph{smoothed} drift signal, averaging $\beta(y)$ over the
neighborhood of $x$. This RKHS smoothing guarantees $\phi^* \in \mathcal{H}^d$ but introduces
a bias: the particle at $x$ responds to the drift at nearby $y$, not at $x$ itself.
Our framework evaluates $\beta$ exactly at each generator output via autograd, recovering the
unsmoothed limit
\begin{equation}
  \phi^*_{q,p}(x) \;\xrightarrow{k \to \delta_x}\; \beta(x).
  \label{eq:svgd_limit}
\end{equation}
Substituting $\phi = \beta$ into Liu and Wang's KL derivative formula yields the Fisher
divergence $\mathcal{F}(q,p) = \mathbb{E}_q[\|\beta\|_2^2]$, whereas SVGD descends
$\KL(q\|p)$ in the RKHS unit ball. Table~\ref{tab:svgd_compare} summarizes the comparison.

\begin{table}[h]
  \centering
  \caption{Comparison between SVGD and our framework for the Reverse KL case ($w = 1$).}
  \label{tab:svgd_compare}
  \smallskip
  \begin{tabular}{lll}
    \toprule
    & SVGD~\cite{liu2016stein} & Ours (Reverse KL) \\
    \midrule
    Update signal & $\phi^*(x) = \mathbb{E}_{y\sim q}[k(y,x)\,\beta(y)]$ & $\mathbf{V}(x) = \beta(x)$ \\
    Score of $p$ & Kernel-smoothed & Exact via autograd \\
    Score of $q$ & Implicit via $\nabla_y k$ & KDE estimate $\nabla\!\log\hat{q}(x)$ \\
    Objective & $\KL(q\|p)$ in $\mathcal{H}^d$ unit ball & Fisher div.\ $\mathcal{F}(q,p)$ \\
    Inference & Iterative ($T$ evals) & One-step ($1$ eval) \\
    Architecture & Particle set & Generator $f_\theta$ \\
    \bottomrule
  \end{tabular}
\end{table}

% ============================================================
\section{Proofs for Mode Coverage Theory and Fixed-Point Analysis}
\label{app:mode_coverage_proofs}
% ============================================================

\subsection{Auxiliary first-order identities}

Fix a current density $q_t$ and consider the one-step Euler update
\[
  x^+ = x + h\,\mathbf{V}^{D_f}(x),
\]
with $\mathbf{V}^{D_f}(x)=w(r(x))\,\beta(x)$ and
$D_{q_t}(x)=\log(p(x)/q_t(x))=\log r(x)$. Since $D_{q_t}$ is differentiable, Taylor expansion
gives
\begin{equation}
  D_{q_t}(x^+) = D_{q_t}(x) + h\,\mathbf{V}^{D_f}(x)\cdot\nabla D_{q_t}(x) + o(h).
  \label{eq:coverage_classification}
\end{equation}
Using $\nabla D_{q_t}(x)=\beta(x)$, we obtain
\[
  \mathbf{V}^{D_f}(x)\cdot\nabla D_{q_t}(x)
  = w(r(x))\,\beta(x)\cdot\beta(x)
  = w(r(x))\,\|\beta(x)\|^2,
\]
which proves \eqref{eq:coverage_classification}.

For the density identity, the continuity equation
\[
  \partial_t q_t + \nabla\cdot(q_t\mathbf{V}^{D_f}) = 0
\]
implies the Euler expansion
\begin{equation}
  q_{t+h}(x)=q_t(x)-h\,\nabla\cdot\!\bigl(q_t(x)\,\mathbf{V}^{D_f}(x)\bigr)+o(h),
  \label{eq:density_response_1}
\end{equation}
which is \eqref{eq:density_response_1}.  Substituting $\mathbf{V}^{D_f}=w(r)\beta$ gives
\[
  \nabla\cdot(q_tw(r)\beta)
  =
  w(r)\,\nabla\cdot(q_t\beta)
  +
  q_t\,\nabla w(r)\cdot\beta.
\]
Since $w=w(r)$ and $\nabla r = r\,\beta$, one has
\[
  \nabla w(r)=w'(r)\nabla r = w'(r)\,r\,\beta,
\]
and therefore
\[
  q_t\,\nabla w(r)\cdot\beta
  =
  q_t\,w'(r)\,r\,\|\beta\|^2,
\]
Therefore,
\begin{equation}
  q_{t+h}(x)-q_t(x)
  =
  -h\,w(r)\,\nabla\cdot(q_t\beta)
  -h\,q_t\,w'(r)\,r\,\|\beta\|^2
  +o(h).
  \label{eq:density_response_2}
\end{equation}
which proves \eqref{eq:density_response_2}.

\subsection{Proof of Proposition~\ref{prop:soft_undercoverage}}

\begin{proof}[Proof of Proposition~\ref{prop:soft_undercoverage}]
  Write $\xi_t(x)\coloneqq -\nabla\cdot(q_t\mathbf{V})(x)$. For a general drift field
  $\mathbf V$, the continuity equation gives
  \[
    q_{t+h}(x)=q_t(x)+h\,\xi_t(x)+o(h).
  \]
  Let $\phi_\varepsilon(z)\coloneqq [\varepsilon-z]_+$.  For every $x$ with
  $q_t(x)\neq\varepsilon$, the function $\phi_\varepsilon$ is differentiable at $q_t(x)$ and
  satisfies
  \[
    \phi_\varepsilon(q_t(x)+h\,\xi_t(x))
    =
    \phi_\varepsilon(q_t(x))
    - h\,\xi_t(x)\,\mathbbm{1}_{\{q_t(x)<\varepsilon\}}
    + o(h).
  \]
  Since the non-differentiability set
  $\{x:\ p(x)\ge \delta,\ q_t(x)=\varepsilon\}$ has Lebesgue measure zero, this identity holds
  almost everywhere on $\{p\ge \delta\}$. Multiplying by
  $p(x)\mathbbm{1}_{\{p(x)\ge\delta\}}$ and integrating gives
  \[
    U_{\delta,\varepsilon}(q_{t+h})
    =
    U_{\delta,\varepsilon}(q_t)
    - h\int_{\Omega_{\delta,\varepsilon}(q_t)} p(x)\,\xi_t(x)\,dx
    + o(h),
  \]
  which is exactly \eqref{eq:soft_undercoverage_decay}. If
  $G_{\mathbf V}(t;\delta,\varepsilon)>0$, the first-order term is strictly negative, so
  $U_{\delta,\varepsilon}(q_{t+h})<U_{\delta,\varepsilon}(q_t)$ for all sufficiently small
  $h>0$.
\end{proof}

\subsection{Proof of Proposition~\ref{prop:regional_repair_fdiv}}

\begin{proof}[Proof of Proposition~\ref{prop:regional_repair_fdiv}]
  Substituting the identity \eqref{eq:density_response_2} into the definition
  \eqref{eq:regional_repair_score} yields
  \[
    G_f(t;\delta,\varepsilon)
    =
    \int_{\Omega_{\delta,\varepsilon}(q_t)}
    p(x)\Bigl[
      -w(r)\,\nabla\cdot(q_t\beta)
      - q_t\,w'(r)\,r\,\|\beta\|^2
    \Bigr]dx.
  \]
  Factoring the integrand gives
  \[
    -w(r)\,\nabla\cdot(q_t\beta) - q_t\,w'(r)\,r\,\|\beta\|^2
    =
    q_t\,w(r)\,\|\beta\|^2
    \left[
      \frac{-\nabla\cdot(q_t\beta)}{q_t\,\|\beta\|^2}
      - \frac{r\,w'(r)}{w(r)}
    \right].
  \]
  This is precisely \eqref{eq:regional_repair_elasticity}.  Points where $\|\beta\|=0$
  contribute zero to the integral and therefore do not affect the identity. Under the stated
  sufficient condition, the integrand in \eqref{eq:regional_repair_elasticity} is nonnegative
  almost everywhere on $\Omega_{\delta,\varepsilon}(q_t)$ and strictly positive on a subset of
  positive measure on which the prefactor $p\,q_t\,w(r)\,\|\beta\|^2$ is positive. Therefore the
  integral is strictly positive, so $G_f(t;\delta,\varepsilon)>0$.
\end{proof}

The entries in Table~\ref{tab:regional_repair_examples} follow by direct substitution of the
corresponding $w(r)$ into \eqref{eq:eta_f} and \eqref{eq:regional_repair_elasticity}.

\subsection{Fixed-Point Analysis: Sufficient Conditions}
\label{app:fixed_points}

\subsubsection{Proof of Proposition~\ref{prop:fixed_point}}

\begin{proof}[Proof of Proposition~\ref{prop:fixed_point}]
  When $w(r) > 0$ for all $r > 0$, the equation
  $\mathbf{V}(x) = w(r) \cdot \beta(x) = \mathbf{0}$ reduces to
  $\beta(x) = \nabla \log r(x) = \mathbf{0}$ on the domain. Hence $\log r(x)$ is constant on
  each connected component, so $p(x)=c\,q(x)$ for some constant $c>0$. Normalization of $p$ and
  $q$ forces $c=1$, and therefore $p=q$.
\end{proof}

\subsubsection{LV Divergence Cases}

For Case~1 ($\nu=q$): degenerate fixed points arise when $\supp(q)$ is partitioned into
$A_1 = \{x: \log\tilde{r}(x) - \E_q[\log\tilde{r}] = 1\}$ and $B_1 = \{x: \beta(x) = \mathbf{0}\}$,
with distinct local proportionality constants $p/q \equiv C_{A_1}$ on $A_1$ and $p/q \equiv C_{B_1}$ on $B_1$.
This illustrates that zeros of the prefactor can coexist with zeros of $\beta$. A sufficient
condition ruling out this particular degeneracy is $\lambda(A_1)=0$ (Lebesgue measure zero)
together with full support.

For Case~2 ($\nu=p$): degenerate fixed points arise on the level set
$A_2 = \{x: \log\tilde{r}(x) - \E_p[\log\tilde{r}] = -1\}$. Mode collapse additionally causes $\alpha_2 \to -\infty$
when $q$ becomes very small, but particles cannot represent mass outside $\supp(q)$. A sufficient
condition excluding the most obvious support degeneracy is full support, for instance via an NF
generator.

For Case~3: the additive structure means $\beta = -\gamma$ is the balance equation. Thus the
equilibrium condition is no longer simply $\beta=0$, and exact coincidence with $p=q$ generally
requires additional alignment between the reference distribution and the current sampler.

% ============================================================
\section{Sinkhorn Normalization: Preventing Repulsion Collapse}
\label{app:sinkhorn}

\paragraph{Collapse of row-softmax at small $\tau$.}
Under the row-softmax coupling, $W_{ij} = \exp(-\|x_i - x_j\|^2/\tau) / \sum_k \exp(-\|x_i - x_k\|^2/\tau)$.
For any fixed particle configuration with distinct pairwise distances, the limit $\tau \to 0$
yields $W_{ij} \to \delta_{j = j^*(i)}$ where $j^*(i) = \arg\min_j \|x_i - x_j\|^2$ is the
nearest neighbor.
The repulsion consequently reduces to $\nabla \log \hat{q}(x_i) \approx (x_{j^*(i)} - x_i)/\tau$,
a nearest-neighbor displacement that carries no information about particles in other modes.
For a multi-modal $q$ with inter-mode distance $\Delta$ and intra-mode spread $\sigma$, particles
in one mode exert zero repulsive force on particles in another whenever $\tau \ll \Delta^2$,
allowing independent mode collapse within each cluster.

\paragraph{Doubly-stochastic coupling maintains global repulsion.}
The Sinkhorn-Knopp algorithm produces the unique doubly-stochastic matrix
$W^* = \mathrm{diag}(u)\, K\, \mathrm{diag}(v)$, $K_{ij} = \exp(-\|x_i - x_j\|^2/\tau)$,
where scaling vectors $u, v > 0$ satisfy $W^* \mathbf{1} = \mathbf{1}/N$ and $W^{*\top} \mathbf{1} = \mathbf{1}/N$.
In practice we implement this normalization in log-space by alternating
\begin{equation}
  L^{(t+1)}_{ij} = L^{(t)}_{ij} - \log\!\sum_{k}\exp(L^{(t)}_{ik}), \qquad
  L^{(t+2)}_{ij} = L^{(t+1)}_{ij} - \log\!\sum_{k}\exp(L^{(t+1)}_{kj}),
  \label{eq:sinkhorn_iter}
\end{equation}
for $T$ steps, followed by a final row normalization to produce the weight matrix $W$.
The column-sum constraint enforces that each source particle $j$ contributes total weight exactly
$1/N$ across all repulsion vectors, preventing any single pair from monopolizing the coupling.
Even at $\tau \ll \Delta^2$, inter-mode pairs retain weight $\Omega(1/N)$, so the mean-shift
vector $W^* \mathbf{x} - x_i$ contains a non-vanishing contribution from distant modes.
The log-space iteration~\eqref{eq:sinkhorn_iter} is numerically stable; $T \ge 10$ iterations are
sufficient for practical convergence in our experiments.

\paragraph{Connection to \citet{he2026sinkhorndriftinggenerativemodels}}
\citet{he2026sinkhorndriftinggenerativemodels} apply doubly-stochastic Sinkhorn normalization to \emph{both}
the attractive coupling $P_{XY}$ (generated-to-real) and the repulsive coupling $P_{XX}$
(generated-to-generated) in a data-based drifting model, showing that this restores the
identifiability property $V = 0 \Leftrightarrow p = q$.
In our data-free setting, the attractive term $\nabla \log p(x) = -\nabla E(x)$ is computed
exactly via autograd and requires no coupling matrix.
We therefore apply Sinkhorn normalization exclusively to the repulsive KDE coupling, preserving
the data-free property while inheriting the collapse-prevention benefit.
The identifiability argument of Appendix~\ref{app:fixed_points} is unaffected, since it
relies on the exactness of $\nabla \log p$ rather than the form of the repulsive weight matrix. $\square$

\end{document}